\DeclareMathOperator*{\argmax}{\text{argmax}}
\DeclareMathOperator*{\argmin}{\text{argmin}}
\DeclareMathOperator*{\minimize}{\text{minimize}}
\newcommand{\cA}{\mathcal{A}}
\newcommand{\cF}{\mathcal{F}}
\newcommand{\cG}{\mathcal{G}}
\newcommand{\cC}{\mathcal{C}}
\newcommand{\E}{\mathbb{E}}
\renewcommand{\P}{\mathbb{P}}
\newtheorem{theorem}{Theorem}
\newtheorem*{proposition*}{Proposition}
\newtheorem{proposition}{Proposition}
\newtheorem{assumption}{Assumption}
\newtheorem{lemma}{Lemma}
\newtheorem{corollary}{Corollary}
\newtheorem*{theorem*}{Theorem}
\DeclareMathOperator{\Ber}{Ber}
\DeclareMathOperator{\VC}{VC}
\DeclareMathOperator{\err}{err}
\newtheoremstyle{break}
  {}
  {}
  {\itshape}
  {}
  {\bfseries}
  {:}
  {\newline}
  {}
\theoremstyle{break}
\newtheorem*{assumption*}{Assumptions}
\begin{document}

\twocolumn[

\aistatstitle{Leveraging Time Irreversibility with Order-Contrastive Pre-training}

\aistatsauthor{ Monica Agrawal* \And Hunter Lang* \And Michael Offin \And Lior Gazit \And David Sontag}

\aistatsaddress{MIT CSAIL \And MIT CSAIL \And  MSKCC \And MSKCC\And MIT CSAIL } ]

\begin{abstract}
  Label-scarce, high-dimensional domains such as healthcare present a challenge for modern machine learning techniques. To overcome the difficulties posed by a lack of labeled data, we explore an ``order-contrastive'' method for self-supervised pre-training on longitudinal data. 
We sample pairs of time segments, switch the order for half of them, and train a model to predict whether a given pair is in the correct order. 
Intuitively, the ordering task allows the model to attend to the \emph{least time-reversible} features (for example, features that indicate progression of a chronic disease).
The same features are often useful for downstream tasks of interest.
To quantify this, we study a simple theoretical setting where we prove a finite-sample guarantee for the downstream error of a representation learned with order-contrastive pre-training.
Empirically, in synthetic and longitudinal healthcare settings, we demonstrate the effectiveness of order-contrastive pre-training in the small-data regime over supervised learning and other self-supervised pre-training baselines.
Our results indicate that pre-training methods \emph{designed for} particular classes of distributions and downstream tasks can improve the performance of self-supervised learning.
\end{abstract}

\section{Introduction}
\label{sec:intro}
The advent of electronic health records has led to an explosion in longitudinal health data. This data can power comparative effectiveness studies, provide clinical decision support, enable retrospective research over real-world outcomes, and inform clinical trial design.
However, longitudinal health data is often complex, unstructured, and high-dimensional and thus untapped.
Typically, limited \emph{labeled} data is available for downstream tasks of interest, and labels can be prohibitively expensive to obtain: long records are tedious to synthesize, comprehension requires domain expertise, and patient privacy regulations limit data-sharing across institutions \citep{agony, xia2012clinical}.
Fortunately, given the large amount of unlabeled data, \emph{self-supervision} is a promising avenue.

In self-supervision, models are first pre-trained to optimize an objective over unlabeled data, with the goal of learning representations that capture important semantic structure about the input data modality.
For example, in masked language modeling for text, the model is trained to predict the identity of randomly masked tokens. 
Performing well at this objective should require a representation of sentence syntax and semantics.
Once pre-trained, self-supervised representations can be used for downstream supervised tasks.
\begin{figure*}
\centering
    \centering
    \includegraphics[width=1.0\textwidth]{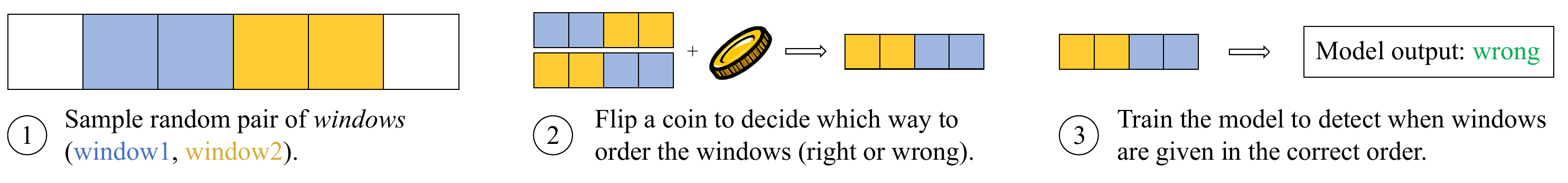}
    \caption{Depiction of the data-generation and learning process for order-contrastive pre-training.
    For each trajectory, a pair of consecutive windows is sampled uniformly at random, flipped with probability 0.5, and presented to the model. The model is trained to predict whether the presented pair of windows is in the correct ($+1$) or incorrect ($-1$) order.}
    \label{fig:ocl-gen-process}
\end{figure*}
However, despite the success of self-supervision across domains, the development of new self-supervised objectives has been a largely heuristic endeavor. 
\emph{Why} does pre-training improve performance on downstream tasks?
Whether this happens depends on both the self-supervised objective and the downstream task itself.
But the assumptions linking the self-supervised objective to the downstream tasks of interest are rarely, if ever, made explicit. 

In this work, we design a self-supervised objective with a \emph{particular class} of data distributions and downstream tasks in mind.
We aim to make explicit the type of distributions and downstream tasks on which we expect this method to work.
We are interested primarily in the types of time-series that arise in longitudinal health data for patients, particularly those with chronic conditions, e.g. cancer, autoimmune disorders, and neurodegenerative diseases. These time-series include long sequences of clinical notes, insurance claims data, biomarker measurements, or combinations thereof. 
A key differentiating feature of these data is that a given trajectory can \emph{change quickly}.
For example, a patient may develop a new symptom between subsequent healthcare visits, or a certain biomarker value (e.g., blood pressure) may dramatically increase. 

Additionally, these changes largely tend to be \emph{irreversible} with respect to time.
For example, once the word ``metastasis'' appears in a clinical note, nearly all subsequent notes tend to comment on the state of that metastasis (so the word ``metastasis'' appears in those notes as well). 

To train a good representation for certain downstream problems (e.g., ``what is the patient's current disease state?''), the self-supervised objective should \emph{attend} to these changes, rather than suppress them.

These properties make such data distributions unsuitable for several existing self-supervised objectives.
For example, \citet{franceschi2019} train a model so that the representation of each time segment is more similar to those of its subsegments than the representation of a randomly chosen segment from another trajectory.
Similar techniques have been used to learn image representations from video: two subsequent video frames are likely to contain the same objects \citep{mobahi2009deep, goroshin2015unsupervised}.
These approaches are all similar to the idea of \emph{slow feature analysis} \citep{wiskott2002slow} for extracting representations of an input signal that change slowly over time.
Representation learning techniques based on the ideas of slow feature analysis are appropriate for some downstream tasks and time-series data types, such as the ones studied in the works above, but not, we argue, for data where the latent variable of interest (such as disease state) can have large changes between subsequent time steps (e.g., between visits to a medical care center). 
Motivated by the example of chronic diseases in healthcare, we focus on the setting where time-irreversible features are highly useful for downstream classification, and where these features may exhibit large changes between subsequent time steps.

In this work, we introduce a self-supervised objective called \emph{order-contrastive pre-training} (OCP).
For each trajectory in the input data, we sample random pairs of time segments, switch the order for half of them, and train a model to predict whether a given pair is in the correct order (\emph{positives}) or in the incorrect order (\emph{negatives}).

This procedure is shown in Figure \ref{fig:ocl-gen-process}. OCP is very similar to an existing technique known as \emph{permutation-contrastive learning}, or PCL \citep{hyvarinen2017nonlinear}.
PCL was also designed to take advantage of temporal dependence between features of the input signal to learn useful representations.
The key difference between these two objectives is in the sampling of the \emph{negatives}.
Where the negatives in OCP are incorrectly-ordered window pairs, the negatives in PCL are \emph{random} window pairs from the same trajectory, and could be in the correct order.
In their simplest forms, the positive samples for the two methods are identical: pairs of consecutive windows in the correct order.

Intuitively, the same time-irreversible features that are useful for the OCP and PCL objectives should also be useful for downstream prediction tasks. 
To formalize and quantify this, we study a class of data distributions motivated by the preceding discussion.
When the representation belongs to a simple hypothesis class (effectively, when the representation is a \emph{feature selector}), we prove a finite-sample bound on the \emph{downstream} error of a representation learned using OCP.
Although this setting is much simpler than those that appear in similar work (it involves \emph{linear}, rather than nonlinear, representations of the input data), we show that this model still admits interesting behavior.
In particular, we give an example of a data distribution in this setup where OCP and PCL provably learn different representations.
Additionally, this model indicates that even when two methods have the same performance with infinite unlabeled data, there is an unlabeled-sample-complexity benefit to using a ``clean'' distribution of negatives, which matches well with prior work on other contrastive learning algorithms \citep{chuang2020debiased}.

We supplement this motivating theoretical study with experiments on real-world time-series data.
Our results indicate that for the types of data and tasks discussed above, both OCP and PCL representations can enjoy better downstream prediction performance than those trained using existing self-supervised baselines.
Moreover, complementing our theoretical results, we show a real-world scenario where OCP outperforms PCL in the low labeled-data regime despite the seemingly minor difference between the two objectives.
Given that OCP and PCL only differ slightly in their negative sampling, these results give further theoretical and empirical evidence for the importance of the negative sampling details in contrastive learning, complementing several recent works \citep{chuang2020debiased, robinson2021contrastive, liu2021contrastive}.

\section{Order-pretraining algorithm}
We suppose each data point $X$ is a time series, $X = (X^1, \ldots, X^{\tau})$,
where $\tau$ is the number of sample points and may vary with $X$.
We also suppose the samples take values in some common set $\mathcal{X}$.
Let a window $w$ be an element of $\{1,\dots,\tau\}$, and let $X^w$ be the corresponding element of $X$.\footnote{For simplicity, we only consider windows of size 1. Our results straightforwardly generalize to windows of arbitrary size $\ell$, where $w=(w_1,\dots, w_\ell)$ is a subinterval of $\{1,\dots,\tau\}$ and $X^w=(X^{w_1},\dots, X^{w_\ell})$.}

Given a trajectory $X$, we use the following generative process to sample a data point $(Z,Y)$ for our contrastive task.
First, $Y$ is chosen uniformly at random from $\{-1,1\}$.
Next, random windows $W$ and $W'$ are chosen (in a manner explained below).
The segments $X^W$ and $X^{W'}$ corresponding to windows $W,W'$ are combined into a tuple $Z$.
The pair $(Z,Y)$ is then a sample for the contrastive task.
A model $h$, given by a composition of a classifier $c\in \cC$ and a representation $g\in\cG$, is trained to predict $Y$ from $Z$:
\begin{equation}
  \label{eqn:pretraining-obj}
  \minimize_{h = (c,g)} R_{ord}(h) := \E_{(Z,Y)}[h(Z) \ne Y]
\end{equation}
Here $h(Z) = h(X^{W}, X^{W'}) = c(g(X^W), g(X^{W'}))$.
That is, $h$ first computes the representation $g$ for each window, then uses a classifier $c$ to predict whether the tuple $Z$ is in the correct order.
The representation $g$ can then be re-used on a downstream task.
The remaining design choice is to specify the process for sampling windows.

\paragraph{Order-contrastive pre-training.}
A simple choice for sampling random windows $W$, $W'$ is to sample a random pair $(W,W+1)$ in the \emph{correct} order when $Y=1$, and $(W+1,W)$ in the \emph{incorrect} order when $Y = -1$.
We refer to the optimization problem \eqref{eqn:pretraining-obj} with this choice of sampling as \emph{order-contrastive pre-training} (OCP).
This can easily be generalized to non-consecutive window pairs.
The pretraining task \eqref{eqn:pretraining-obj} is thus to \emph{contrast} windows in the correct order with windows in the incorrect order.

\paragraph{Permutation-contrastive learning.}
Another simple choice is to again sample a pair $(W,W+1)$ in the correct order when $Y=1$, but sample a \emph{random pair} when $Y=-1$.
This is the data generation process for \emph{permutation-contrastive learning} \citep{hyvarinen2017nonlinear}.
Note that the only differences between OCP and PCL are that in PCL, (i) the negative samples ($Y=-1)$ need not be consecutive, and (ii) some are in the correct order. 
The distributions of positive samples are identical.
We refer to the procedure \eqref{eqn:pretraining-obj} with this sampling as \emph{permutation-contrastive learning} (PCL).
This exactly matches the contrastive sample distribution in \citet[][equations (10)-(11)]{hyvarinen2017nonlinear}.
Here, the pretraining task is to contrast consecutive windows in the correct order versus random window pairs.

\paragraph{Comparison.} These two sampling methods seem very similar---they only differ slightly in the distribution of negatives (i.e., conditioned on $Y=-1$).
However, we show theoretically and empirically in the following sections that they can learn very different representations when used in \eqref{eqn:pretraining-obj}, and they can have different unlabeled sample complexities even if they eventually find the same representation.
This gives further evidence of the importance of negative sampling for contrastive learning methods (see, e.g., \citet{chuang2020debiased}).
We give a finite-sample bound for the downstream classification performance of a representation learned using OCP in a simple setup motivated by time series data and predictive tasks in healthcare.

\section{Finite-sample guarantee for time-irreversible features}
\label{sec:setup}

In this section, we study a class of distributions motivated by applications to time-series data in healthcare.
We assume for simplicity that each $X^t \in \mathcal{X} = \{0,1\}^d$. 
We identify a set of four assumptions for which we can prove a finite sample guarantee for the set of \emph{feature selector} representations $\cG$. Here we use $\cF$ to refer to the downstream hypothesis class, and we overload $Y$ to refer to the \emph{downstream} label of interest.

\begin{assumption}
\label{asmp:s-stays-active}
There exists a set $S$ of time-irreversible features. Formally,  $\forall i\in S, \forall t$, $\P[X_i^t = 1, X_i^{t+1} = 0] = 0$.
\end{assumption}

\begin{assumption}
\label{asmp:time-reversible}
When the features in $S$ are not changing, the other features are time-reversible. More formally, for all $t$, and all $v, v' \in \{0,1\}^d$, if $v_S = v'_S$, $\P[X^t = v, X^{t+1} = v'] = \P[X^t = v', X^{t+1} = v]$.
\end{assumption}
\begin{assumption}
  \label{asmp:U-asmp}
  There are no ``redundant'' features in $S$. For all $U$ such that $S\not\subset U$, there exists $t$ and $v\in \{0,1\}^d$, $v'\in \{0,1\}^d$, with $v_{U\cap S} = v'_{U\cap S}$ and:
  \begin{align*}
  \min&\P[X^t_S \subsetneq X^{t+1}_S, X^t_U = v_U, X^{t+1}_U = v'_U];\\
     &\P[X^t_S \subsetneq X^{t+1}_S, X^t_U = v'_U, X^{t+1}_U = v_U]) > 0.
  \end{align*}
\end{assumption}
\begin{assumption}
  \label{asmp:S-suitable}
The features $S$ are suitable for the downstream classification task (here $Y$ refers to the downstream label):
    \[
    \argmin_{f \in \cF} \P[f(X_S) \ne Y] = \argmin_{f \in \cF, g\in \cG} \P[(f\circ g)(X) \ne Y]
    \]
\end{assumption}
\vspace{-1em}
Intuitively, the first two assumptions guarantee that the feature-set $S$ is an optimal choice of representation for the OCP pretraining objective (when $\cG$ is the class of feature selector representations), and the third (more technical) assumption guarantees that the optimum is unique (e.g., by preventing the possibility that pretraining leaves out a feature in $S$ that is redundant for the ordering objective, but useful downstream).
The last assumption ensures that the features $S$ are suitable for the downstream classification task on the population: the loss achievable by the best $f\in \cF$ using $X_S$ as the representation is the same as the loss achievable by using the best $(f,g)$ pair.

When these assumptions are satisfied, we prove a finite-sample bound for a model pretrained using OCP.
The bound only depends on the VC-dimension of the \emph{downstream} hypothesis class, $\VC(\cF)$, rather than on $\VC(\cF \times \cG)$.

Like some results in the nonlinear ICA literature (e.g., \citet{hyvarinen2017nonlinear}) our results only apply in the regime where there is enough unlabeled data to identify the ``correct'' representation.
It's then immediate that only $\cF$ factors in to the labeled-data dependence.
However, our model also allows us to give upper bounds on the \emph{amount} of unlabeled data required to reach that regime.
This allows us to more rigorously study other aspects of contrastive pretraining, such the role of \emph{bias} in the negative distribution, which has been shown to affect the performance of other contrastive learning algorithms \citep{chuang2020debiased}.

We now give a simple example of a class of distributions satisfying these assumptions, grounded in our running application of health time-series data.
Despite its simplicity, our findings suggest that this model allows for several interesting phenomena that also occur in practice, which could make it useful for further study of contrastive learning methods on time-series.

\subsection{Extraction example}
\label{sec:model}
A common task in clinical informatics is to extract for each time $t$ the patient's structured disease stage, which enables downstream clinical research \citep{kehl2019assessment, kehl2020natural}.
Each time point $X^t$ could be an encoding of the clinical note from a patient's visit at time $t$. 
Let $Y^t \in \{0,1\}$ be the observed label for time point $X^t$.
The end goal is to train a model $f$ over a representation $g$ to minimize the downstream risk:
\begin{equation*}
    \minimize_{(f,g)}R(f,g) := \E_{(X^T,Y^T)}[f(g(X^T)) \ne Y^T].
\end{equation*}
Here we make a prediction for every time point, and the expectation is over the time index $T$ as well as the trajectory $(X,Y)$.
\paragraph{Model.} For each $A \subset [d]$, we denote by $X^t_A$ the random variable corresponding to indices $A$ at time $t$. Suppose the set of feature indices $[d]$ is partitioned into three types of features:
\begin{itemize}{\itemsep=1em}
    \item A set $S \subset [d]$ of time-irreversible features.
    We also assume that each $i\in S$ has a nonzero probability of activating on its own, without the other features in $S$.
    That is, for each $i\in S$ there exists $t$ with $\P[X^t_i = 0, X^{t+1}_i = 1, X^t_{S \setminus \{i\}} = X^{t+1}_{S \setminus \{i\}}] > 0$.
    This ensures that assumption \ref{asmp:U-asmp} is satisfied.
    Such features include the onset/progression of chronic conditions and markers of aging \citep{pierson2019inferring}.
    For example, appearance of the word ``metastasis'' in a clinical note.
    \item Noisy versions $\hat{S}$ of $S$: for each $j\in \hat{S}$, there exists $i \in S$ with $\P[X^t_j = X^t_i] = (1-\epsilon_i)$, with $\epsilon_i > 0$, for all $t$. 
    Additionally, $X^t_j$ is conditionally independent of the other variables (for all times) given its parent variable $X^t_i$. 
    For example, the presence of certain interactions with the health system---such as deciding to attend physical therapy---may be a noisy reflection of the patient's true disease state, which is captured by $X^t_S$. 
    \item Background, reversible features $B$: features such that for all $t$ and all $v,v' \in \{0,1\}^d$, 
    \begin{align*}
    &\P\left[(X^t_B, X^t_{[d]\setminus B}) = (v_B, v_{[d]\setminus B}), \right.\\
    &\ \ \ \left.(X^{t+1}_B, X^{t+1}_{[d]\setminus B}) = (v'_B, v'_{[d]\setminus B})\right] = \\
    &\P\left[(X^t_B, X^t_{[d]\setminus B}) = (v'_B, v_{[d]\setminus B}), \right.\\
    &\ \ \ \left.(X^{t+1}_B, X^{t+1}_{[d]\setminus B}) = (v_B, v'_{[d]\setminus B})\right]
    \end{align*}
    Consider, for example, common words such as ``and'', ``chart'', etc., in a clinical note, whose presence or absence gives no order information.
\end{itemize}
Note that we \emph{do not} make any independence assumptions between the features in this example other than the ones mentioned above.

We prove in Theorem \ref{thm:asmp-example} that this example satisfies Assumptions \ref{asmp:s-stays-active}-\ref{asmp:U-asmp}.
Assumption \ref{asmp:S-suitable} is true by design when e.g. $Y^t = f(X^t_S)$ (or a noisy version thereof).
We now provide a simple finite-sample bound for OCP when Assumptions \ref{asmp:s-stays-active}-\ref{asmp:S-suitable} are satisfied and the representation class $\cG$ is the set of \emph{feature selectors}.
 
\subsection{Finite sample bound} 
Suppose we observe a large set of $m$ \emph{unlabeled} data points $\{X_i\}_{i=1}^m$ drawn independently from the marginal distribution of $X$, and a much smaller set of $n$ \emph{labeled} data $\{(X_i, Y_i)\}_{i=1}^n$ drawn independently from the joint distribution of $X$ and the downstream label $Y$ (now we use $Y$ to denote the downstream label rather than the pre-training label).

Let the representation hypothesis class $\cG = \{U \subset [d] : |U| = |S| = d_0\}$.
That is, our representation will \emph{select features} $U \subset [d]$ to be used downstream.
We identify each set of indices $g\in \cG$ with the mapping $\{0,1\}^d\to \{0,1\}^{d_0}$ given by projection onto those indices.
Let $\cF$ be the downstream hypothesis class, with $f: \{0,1\}^{d_0} \to \{0,1\}$.\footnote{Assume $\cF$ is closed under permutations of the input dimensions.
This ensures that we only need to identify the features belonging to $S$, and don't need to put them in a particular order to do well at downstream prediction. The set of linear hypotheses has this property.}
The mapping $f\circ g: \{0,1\}^d\to \{0,1\}$ first selects the input features represented by $g$, then passes the values of these features through $f$.

Our end goal is to design a learning algorithm $\cA$ with a \emph{downstream excess risk bound}. If $(\hat{f}, \hat{g})$ is the hypothesis output by $\cA$, we want an upper bound on the excess risk:
$R(\hat{f}, \hat{g}) - \min_{f\in \cF, g\in \cG} R(f,g).$
If we let $\cA_{ds}$ be empirical risk minimization (ERM) over $\cF\times\cG$ on the small labeled sample (i.e., the method directly optimizing the downstream objective over $\cF \times \cG$ without pre-training),\footnote{Assume for simplicity that for each trajectory $(X,Y)$, a single time $T$ is chosen uniformly at random and $\{(X^T, Y^T)\}$ are passed to the learner, so the learner sees i.i.d. samples. A more detailed treatment would handle the dependence between multiple time points to get bounds that decrease as $\tilde{O}(1/\sqrt{n\tau})$ when possible \citep[e.g.,][]{mohri2010stability}.} a standard result (e.g., \citet[][Theorem 6.8]{shalev2014understanding}) implies: 
\begin{equation}
    \label{eqn:direct-downstream-bound}
    R(\hat{f}, \hat{g}) - \min_{f\in \cF, g\in \cG} R(f,g) \le O\left(\sqrt{\frac{\VC(\cF \times \cG))}{n}}\right)
\end{equation}
with high probability over the sampling of the data.
On the other hand, let $\cA_{pt}$ be the algorithm that first uses unlabeled data to \emph{pre-train} a representation $\hat{g}$ by minimizing \eqref{eqn:pretraining-obj}, then minimizes the downstream risk over $\cF \times \{\hat{g}\}$ (i.e., a 2-phase ERM learner).
The following theorem states that under Assumptions \ref{asmp:s-stays-active}-\ref{asmp:S-suitable}, we can give a more parsimonious upper bound on the excess risk.
In what follows, we use $R_{ord}(g)$ to refer to $\inf_{c\in \cC} R_{ord}(c,g)$, and likewise for $\hat{R}_{ord}$.
We give details on the choice of $\cC$ in Appendix \ref{apdx:theory}.
Since $\cG$ is the class of feature selectors and we assumed $\cF$ is closed under permutations of the input dimensions, we sometimes replace $g\in \cG$ below with sets $U \subset [d]$.
\begin{theorem}
\label{thm:finite-sample}
Suppose Assumptions \ref{asmp:s-stays-active}-\ref{asmp:S-suitable} are satisfied, and let $\epsilon_0 = \min_{U : S\not \subset U} R_{ord}(U) - R_{ord}(S)$ be the difference in OCP error between $S$ and the next-best representation.
Suppose we have an unlabeled dataset of $m$ i.i.d. pretraining points $\{(Z_i, Y_i)\}_{i=1}^m$, with:
\[
m > \frac{2\left(\log {d \choose |S|} + \log \frac{4}{\delta}\right)}{\epsilon_0^2},
\]
and a labeled dataset of $n$ downstream training points $\{(X^t_i, Y^t_i)\}_{i=1}^n$.
Let $\cG$ be all sets of size $|S|$ features chosen from the full set of $d$ features.
Let $\hat{g}$ be the minimizer of the empirical OCP pretraining objective:
\[
\hat{g} = \argmin_{g \in \cG} \hat{R}_{ord}(g)
\]
Let $\hat{f}$ be the minimizer of the empirical downstream objective when using the fixed representation $\hat{g}$:
\[
\hat{f} = \argmin_{f \in \cF} \hat{R}(f, \hat{g})
\]
Then for any $\delta > 0$, with probability at least $1-\delta$, $(\hat{f}, \hat{g})$ has excess risk:
\begin{equation}
\label{eqn:pretrain-bound}
\small
R(\hat{f}, \hat{g}) - \inf_{(f,g) \in \cF \times \cG}R(f,g) \le O\left(\sqrt{\frac{\VC(\cF) + \log\frac{1}{\delta}}{n}}\right)
\end{equation}
\end{theorem}
\begin{proof}[Proof (sketch)]
Assumptions \ref{asmp:s-stays-active}-\ref{asmp:time-reversible} are used to show that $S$ is one of the optima for the population OCP objective, i.e., that
\[
S \in \argmin_{g\in\cG} R_{ord}(g).
\]
Then, Assumption \ref{asmp:U-asmp} is used to show that $S$ is actually the \emph{unique} optimum of size $|S|$ (and hence $\epsilon_0 > 0$). 
The condition on $m$ (the amount of pretraining data), together with a standard learning bound for finite classes, is enough to guarantee that OCP identifies $S$ with high probability over the sampling of the pretraining data.
That is, the choice of $m$ guarantees that with high probability,
\[
S = \argmin_{g\in\cG} \hat{R}_{ord}(g).
\]
Assumption \ref{asmp:S-suitable} guarantees that choosing $S$ in the pretraining step does not incur additional error on the downstream task compared to the optimal $(f,g)$ pair (since it states that $S$ is the optimal $g$ for the \emph{downstream} task).
The result follows from a standard uniform convergence bound (e.g., \citet[Theorem 6.8]{shalev2014understanding}) applied to $\cF$.
The full proof is given in Appendix \ref{apdx:theory}.
\end{proof}

The pretrain + finetune bound \eqref{eqn:pretrain-bound} has a better dependence on the labeled dataset size $n$ than the downstream ERM learner bound \eqref{eqn:direct-downstream-bound}.
The large unlabeled dataset allows for the learning of a good representation $\hat{g}$ without using any labeled data.
Even in this simple feature selection setting, this bound may be much tighter than the direct-downstream bound when $\cF$ is a fairly complex hypothesis class and $d_0 \ll d$. Even for $\cF$ linear, $\cF \times \cG$ is roughly $\Theta(d_0\log(d/d_0))$ (see e.g. \citep{abramovich2018high}), so \eqref{eqn:pretrain-bound} can even save over \eqref{eqn:direct-downstream-bound} in this case. 
In fact, we show in Section \ref{sec:experiments} that OCP can improve the performance of sparse linear models in a real-world low-labeled data setting (compared to direct downstream prediction without pre-training).

In this section we gave a simple example of a class of distributions, together with an \emph{assumption} linking the distribution to the downstream task (Assumption \ref{asmp:S-suitable}---the time-irreversible features are the most useful ones for downstream classification) for which we can \emph{prove} that OCP gives a more parsimonious bound on the labeled sample complexity.
While the example in Section \ref{sec:model} seems straightforward, we show now that it still admits interesting behavior.
In particular, there are distributions that satisfy Assumptions \ref{asmp:s-stays-active}-\ref{asmp:U-asmp}, but where PCL and OCP learn different representations.

\paragraph{PCL versus OCP: different infinite-data optima.}
\label{sec:pcl-diff}
There are examples of the model from Section \ref{sec:model} where PCL and OCP learn provably different representations even with infinite unlabeled samples, despite the minor difference in their sampling schemes.\footnote{The example we use includes \emph{nonstationary} features, which violates the assumptions under which PCL is proven in \citet{hyvarinen2017nonlinear} to find the ``right'' representation, so this does not contradict those results.}
Intuitively, the existence of a periodic feature (such as a procedure always performed at a particular time of day) is strongly predictive of whether two samples are consecutive, but need not be predictive of whether a pair of consecutive samples are in the correct order.
Concretely, consider a feature $X_i$ such that $X_i^{t+1} = 1-X_i^t$, and $X_i^1 \sim \Ber(0.5)$. Inclusion of this feature doesn't violate Assumptions \ref{asmp:s-stays-active}-\ref{asmp:U-asmp}---indeed, $X_i$ would qualify as a ``background'' feature under our model---so Theorem \ref{thm:finite-sample} guarantees that OCP finds the correct representation.
However, in PCL, every non-consecutive sample is a negative.
But only non-consecutive samples can have $X_i^t = X_i^{t'}$, so $X_i^t$ is helpful for the PCL objective.
We treat this example more formally in Appendix \ref{apdx:theory}, but our synthetic results in Section \ref{sec:experiments} also show that a background periodic feature can affect the PCL representation.

\paragraph{``Debiased'' negatives.}
PCL has some negatives that are actually in the correct order.
Prior work on contrastive learning has called this ``bias'' in the negative distribution \citep{chuang2020debiased}.
What's the role of this ``bias?'' 
Does it affect the learned representations? 
Does it affect the amount of \emph{unlabeled} data required to find a good representation?
For distributions satisfying assumptions \ref{asmp:s-stays-active}-\ref{asmp:U-asmp} and when $\cG$ is the class of feature-selectors, we answer these questions in the negative and positive, respectively.

In particular, consider the analogue of OCP that instead of \emph{always} choosing $(W+1, W)$ when $Y=-1$ ($Y$ as used in  OCP, not the downstream label), instead just chooses a random pair $(W, W')$ with $|W-W'| = 1$ (i.e., a random consecutive pair).
We refer to this as OCP-biased, since some of the negatives are actually in the correct order.
However, the following theorem shows the estimator obtained by minimizing this objective is \emph{not} biased in a statistical sense:
\begin{theorem}[informal]
When assumptions \ref{asmp:s-stays-active}-\ref{asmp:U-asmp} are satisfied, $S$ is also the unique optimal representation for OCP-biased.
\end{theorem}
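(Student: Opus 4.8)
The plan is to reduce the claim to the already-established OCP optimality/uniqueness result by showing that, over feature-selector representations, the population OCP-biased risk and the population OCP risk are both \emph{affine, strictly decreasing} functions of one and the same quantity: a measure of how asymmetric the (marginalized) one-step transition distribution is under a given feature selector. Since the two objectives then have identical minimizers, and Assumptions~\ref{asmp:s-stays-active}--\ref{asmp:U-asmp} already pin that minimizer down to $\{S\}$ for OCP, the same conclusion is forced for OCP-biased.

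Concretely, fix $g = U \in \cG$ and write $q_U(u,u') := \P_{X,T}[X^T_U = u,\, X^{T+1}_U = u']$ for the forward transition law on the coordinates $U$, averaged over the random time index $T$. First I would compute the Bayes-optimal classifier over $g(Z)$ and its risk. Under OCP-biased the joint law is $\P[g(Z) = (u,u'), Y=1] = \tfrac12 q_U(u,u')$ and $\P[g(Z)=(u,u'), Y=-1] = \tfrac14\big(q_U(u,u') + q_U(u',u)\big)$, so the Bayes rule predicts $+1$ on $(u,u')$ exactly when $q_U(u,u') \ge q_U(u',u)$ --- the \emph{same} threshold rule as in plain OCP. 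Pairing each ordered tuple $(u,u')$ with its reverse and using $\min(a,b)=\tfrac12(a+b)-\tfrac12|a-b|$, a short calculation collapses the Bayes risk to
\begin{equation*}
  R^{\mathrm{biased}}_{ord}(U) \;=\; \tfrac12 \;-\; \tfrac14\,\Delta(U), \qquad \Delta(U) := \tfrac12\sum_{u,u'}\big|q_U(u,u') - q_U(u',u)\big|,
\end{equation*}
and the identical bookkeeping for OCP gives $R_{ord}(U) = \tfrac12 - \tfrac12\,\Delta(U)$. (This uses that $\cC$ is rich enough to realize the Bayes classifier on the finite representation space $\{0,1\}^{d_0}\times\{0,1\}^{d_0}$, which is the same hypothesis needed for the OCP theorem.)

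With these two identities in hand the result is immediate: $U \mapsto R^{\mathrm{biased}}_{ord}(U)$ and $U \mapsto R_{ord}(U)$ are both strictly decreasing affine functions of $\Delta(U)$, hence share their minimizers, namely $\argmax_{U\in\cG}\Delta(U)$. By the OCP optimality/uniqueness result (which under Assumptions~\ref{asmp:s-stays-active}--\ref{asmp:U-asmp} identifies $S$ as the unique minimizer of $R_{ord}$ over feature selectors), this set is exactly $\{S\}$, so $S$ is also the unique optimal representation for OCP-biased.

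I expect the only real work --- and the only place to slip --- to be the Bayes-risk computation for the \emph{mixture} of negatives: one must check that the extra ``correctly ordered'' mass among the negatives rescales the signal and noise terms in a way that leaves the threshold rule unchanged and produces the clean $\tfrac12 - \tfrac14\,\Delta(U)$ form. Once the affine-in-$\Delta$ structure is spotted, nothing else is needed; in particular Assumptions~\ref{asmp:s-stays-active}--\ref{asmp:U-asmp} enter only through the cited OCP theorem and play no independent role, and the windows-of-size-$\ell$ case follows exactly as in the footnote by treating each $X^w$ as a single $\{0,1\}^{d\ell}$-valued symbol.
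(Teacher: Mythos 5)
Your proposal is correct, and it takes a genuinely different route from the paper. The paper re-runs its OCP analysis for the biased objective: it states analogues of Lemmas \ref{lem:S-optimal} and \ref{lemma:S-unique} (Lemmas \ref{lem:biased-S-optimal} and \ref{lemma:biased-S-unique}), giving $\err(U)=\err(S)+\E_{p,q}[m'_U(p,q)]$ where $m'_U$ carries an extra $\tfrac{1}{3}$ factor from the negatives that are correctly ordered, and then concludes via Lemma \ref{lem:m-positive} that $\E[m_U]>0\implies\E[m'_U]>0$, so Assumption \ref{asmp:U-asmp} again forces uniqueness. You instead compute both Bayes risks in closed form over the symmetrized transition law and observe that $R_{ord}(U)=\tfrac12-\tfrac12\Delta(U)$ and $R^{\mathrm{biased}}_{ord}(U)=\tfrac12-\tfrac14\Delta(U)$ are strictly decreasing affine functions of the same asymmetry functional, so the two objectives have literally the same minimizer set and the OCP uniqueness corollary transfers wholesale; I checked your mixture bookkeeping (the threshold rule $q_U(u,u')\ge q_U(u',u)$ and the $\tfrac12-\tfrac14\Delta$ collapse) and it is consistent with the paper's formulas (e.g.\ at $U=S$ both give $\tfrac12\P[X^W_S=X^{W'}_S]+\tfrac14\P_{\mathrm{fwd}}[X^W_S\subsetneq X^{W'}_S]$). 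What your route buys: the identity is assumption-free (Assumptions \ref{asmp:s-stays-active}--\ref{asmp:U-asmp} enter only through the cited OCP result), and it immediately yields the exact relation $\err_{biased}(U)-\err_{biased}(S)=\tfrac12\bigl(\err_{ocp}(U)-\err_{ocp}(S)\bigr)$, which gives the paper's subsequent sample-complexity comparison (the halved $\epsilon_0$) for free rather than by inspecting $m_U$ versus $m'_U$. What the paper's route buys is the explicit conditional decomposition $m'_U(p,q)$, which mirrors its OCP machinery and is the object it reuses in that comparison. One small point worth making explicit in your write-up: your computation uses that the \emph{unordered} consecutive-pair marginal is identical for positives and negatives (true for OCP-biased as defined, since the negative is the same uniform consecutive pair with a random order), and that $\cC$ contains the Bayes rule on the finite representation space, matching the paper's definition of $\err(U)$ as an infimum over measurable classifiers.
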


However, it \emph{does} affect the bound on unlabeled sample complexity required to obtain a good representation:
\begin{proposition}[informal]
The upper bound on the sample complexity required for OCP-biased to identify $S$ is worse than the upper bound for OCP.
\end{proposition}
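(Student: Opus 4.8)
The plan is to reduce the proposition to a single exact identity relating the population OCP-biased risk to the population OCP risk, and then feed the resulting suboptimality gap into the standard two-phase-ERM analysis. First I would fix notation: let $P_+$ denote the conditional law of the tuple $Z$ given a positive pre-training label --- a consecutive pair $(X^T, X^{T+1})$ in the correct order, with $T$ uniform on $[\tau-1]$ --- and $P_-$ the conditional law of $Z$ given a negative label under OCP, i.e.\ the same consecutive pair flipped. Because only the \emph{order} of a consecutive pair, never its location, is randomized differently, the negative distribution of OCP-biased (a uniformly random consecutive pair) is exactly the mixture $\tfrac12 P_+ + \tfrac12 P_-$, while both procedures share the positive distribution $P_+$ and the $\Ber(1/2)$ prior on the pre-training label. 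Writing the prediction of $h = (c,g)$ as a $\{-1,1\}$-valued function of $Z$ and using $\P_{P_+}[h \ne -1] = 1 - \P_{P_+}[h \ne 1]$, a one-line computation gives
\begin{equation*}
  R_{biased}(h) \;=\; \tfrac14 \;+\; \tfrac12\, R_{ord}(h) \qquad \text{for every } h = (c,g).
\end{equation*}

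Since $x \mapsto \tfrac14 + \tfrac12 x$ is increasing, minimizing over the classifier preserves the identity, so $\min_{c} R_{biased}(c,g) = \tfrac14 + \tfrac12 \min_c R_{ord}(c,g)$, with the same minimizing $c$. In particular OCP and OCP-biased have the same population-optimal representation (which re-derives the preceding theorem), but if $\Delta_{OCP}$ and $\Delta_{biased}$ denote the respective suboptimality gaps $\min_{g \ne S} \min_c R(c,g) - \min_c R(c,S)$, then $\Delta_{biased} = \tfrac12 \Delta_{OCP}$. Uniqueness of the OCP optimum (Assumptions \ref{asmp:s-stays-active}--\ref{asmp:U-asmp}) gives $\Delta_{OCP} > 0$, hence $\Delta_{biased} > 0$.

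I would then invoke the textbook fact that a two-phase ERM learner recovers the population-optimal representation as soon as empirical pre-training risks concentrate to within half the gap: if $\sup_{c \in \cC,\, g \in \cG} |\widehat{R}_m(c,g) - R(c,g)| < \Delta/2$, then the representation $\hat g$ selected by ERM over $\cC \times \cG$ must equal $S$ --- otherwise the two-sided deviation bound applied at $\hat g$ and at $S$ forces $\min_c R(c,\hat g) < \min_c R(c,S) + \Delta$, contradicting optimality of $S$ with gap $\Delta$. By the standard VC bound this uniform deviation is $O\!\big(\sqrt{(\VC(\cC \times \cG) + \log(1/\delta))/m}\big)$ with probability $1-\delta$, so it suffices that $m \ge C\,(\VC(\cC \times \cG) + \log(1/\delta))\,/\,\Delta^2$. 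Substituting $\Delta_{biased} = \tfrac12 \Delta_{OCP}$ makes the sufficient sample size for OCP-biased a factor of $4$ larger than for OCP; this is the claimed degradation, and it quantifies the ``$m$ large enough'' hypothesis of Theorem \ref{thm:finite-sample} for the two variants.

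Essentially all of the content lives in the exact identity of the first paragraph; the rest is bookkeeping. The point needing the most care is conceptual rather than technical: the statement compares the \emph{upper bounds} one can prove, not the true sample complexities, and the $1/\Delta^2$ dependence is the (possibly loose) Hoeffding-type rate --- a Bernstein-style refinement could sharpen the exponent but would preserve the qualitative conclusion that a ``cleaner'' (unbiased) negative distribution induces a strictly larger suboptimality gap and hence a smaller sufficient sample size, matching the intuition from prior work on debiased contrastive learning \citep{chuang2020debiased}. A minor technical check is that the mixture decomposition of the OCP-biased negatives is genuinely exact; this holds because the underlying consecutive pair $(T, T+1)$ carries the same uniform marginal in the positive and the negative draws of both procedures.
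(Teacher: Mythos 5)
Your proposal is correct, but it takes a genuinely different route from the paper. The paper re-runs the Bayes-risk computation for OCP-biased from scratch (Lemmas \ref{lem:biased-S-optimal} and \ref{lemma:biased-S-unique}, the analogues of Lemmas \ref{lem:S-optimal} and \ref{lemma:S-unique}), obtaining $\err_{biased}(U)-\err_{biased}(S)=\E_{p,q}[m'_U(p,q)]$ with the extra factor $\tfrac13$ inside the min of $m'_U$, and then concludes only the qualitative inequality $\err_{biased}(U)-\err_{biased}(S)<\err_{ocp}(U)-\err_{ocp}(S)$, hence a smaller $\epsilon_0$ and a worse bound on $m\propto 1/\epsilon_0^2$. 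You instead exploit the exact mixture structure of the biased negatives ($\tfrac12 P_+ + \tfrac12 P_-$, with the window location carrying the same uniform law in both classes, which you rightly flag as the point to check) to get the affine identity $R_{biased}(h)=\tfrac14+\tfrac12 R_{ord}(h)$ for every $h$; since the map is increasing, this simultaneously re-derives Theorem 2 (same unique optimal representation $S$) and gives the \emph{quantitative} relation $\epsilon_0^{biased}=\tfrac12\,\epsilon_0^{ocp}$, hence an explicit factor-$4$ inflation of the sufficient unlabeled sample size, rather than a mere strict inequality. Your identity is consistent with the paper's formulas (one can check $\E[m'_U]=\tfrac12\E[m_U]$ using Assumption \ref{asmp:s-stays-active} to eliminate strict decreases under $P_+$), so nothing is lost; what you gain is a shorter argument and a sharper, constant-explicit comparison, while the paper's longer computation has the side benefit of exhibiting the Bayes classifier and posterior ($\P[Y=1\mid X^W_S\subsetneq X^{W'}_S]=\tfrac23$) for the biased objective explicitly. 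The downstream plumbing is essentially the same in both: uniform convergence over the finite class of feature selectors (the paper uses $\log\binom{d}{|S|}$ for the finite class $\cG$ with a single linear $c$, while you invoke $\VC(\cC\times\cG)$; the difference is immaterial to the claim), and your caveat that the proposition compares provable upper bounds, not true sample complexities, matches the paper's framing.
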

While this proposition only compares upper bounds, our synthetic experiments (see Figure \ref{fig:synthetic}) indicate that OCP is more sample-efficient than OCP-biased.
We prove these results in Appendix \ref{apdx:theory}.
\section{Experiments}
\label{sec:experiments}

\subsection{Synthetic data}
\label{subsec:synth-data}

\begin{figure}[t]
\centering
\begin{subfigure}{\linewidth}
  \centering
    \includegraphics[width=.7\linewidth]{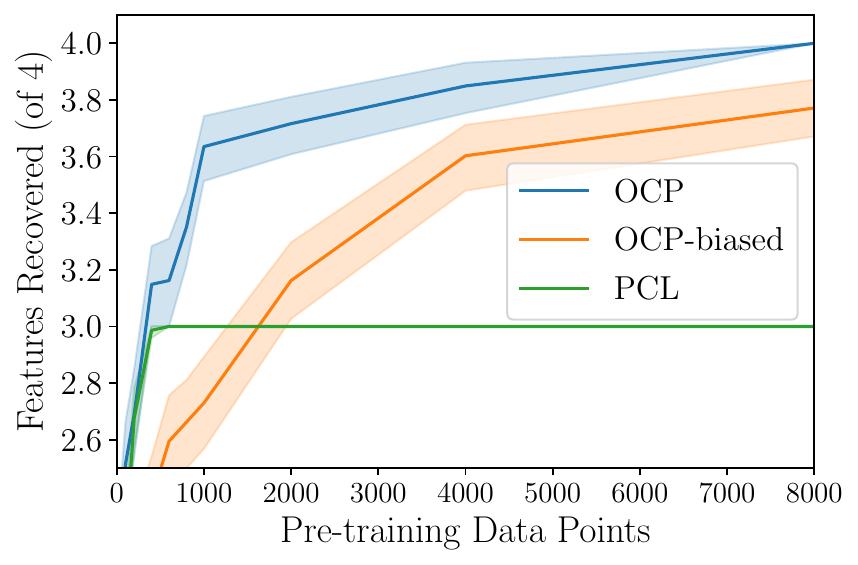}
  \label{fig:synthetica}
\end{subfigure}\\
\begin{subfigure}{\linewidth}
  \centering
    \includegraphics[width=.7\linewidth]{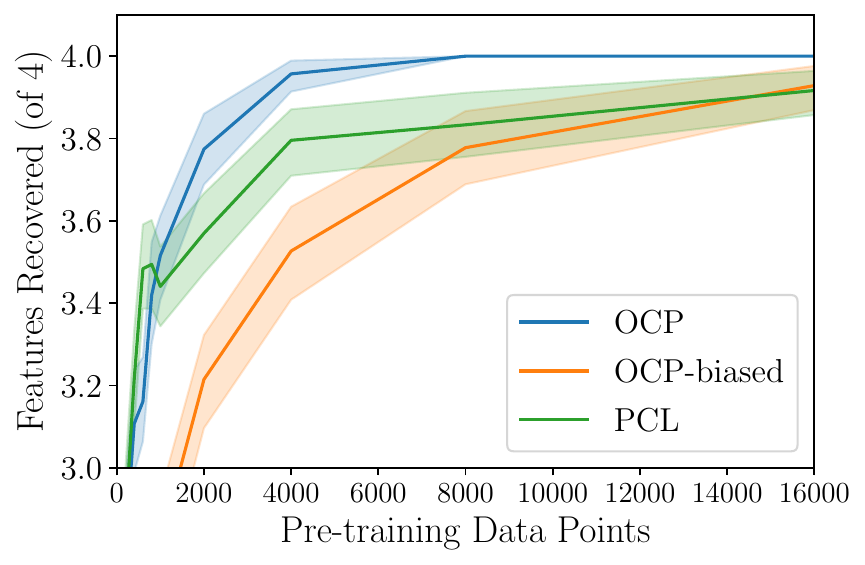}
  \label{fig:syntheticb}
\end{subfigure}
\setlength{\belowcaptionskip}{-10pt}
\caption{Synthetic experiments admitting different behavior across pre-training setups. (Top): PCL is unable to ever recover all four true features in $S$; (Bottom): PCL recovers the true representation, but requires higher sample complexity than OCP.}
\label{fig:synthetic}
\end{figure}

\label{sec:synthetic}
We demonstrate the importance of negative sampling over two synthetic datasets from the model in Section \ref{sec:setup}.
Each distribution contains $|S|=4$ alongside a number of noisy features. 
We generate pre-training datasets of different sizes (50 to 16,000) and sample pairs from each dataset according to OCP, PCL, and OCP-biased. 
We then conduct a logistic regression with L0 penalty over the sample pairs and analyze how many variables in $S$ were correctly recovered. 
The top panel of Figure \ref{fig:synthetic} shows a distribution where PCL does not recover $S$ in the infinite data limit---this distribution includes a periodic background feature in $B$ that is selected by PCL.
The bottom panel of Figure \ref{fig:synthetic} shows a distribution where PCL is able to recover all of $S$, but requires a larger sample complexity than OCP.
In both cases, OCP and OCP-biased find the same representation, but the former has better dependence on unlabeled data.
We provide the details and explanations for these experiments in Appendix \ref{apdx:synth-experiments}.


\subsection{Real-world data}
\label{subsec:real-data}
We show OCP yields significant improvements in the low-label regime on extraction from clinical notes. 
\vspace{-1em}
\paragraph{Progression dataset.}
\label{sec:dataset}
We utilize a dataset of fully de-identified clinical notes from Memorial Sloan Kettering Cancer Center. This research was reviewed by the MIT Committee on the Use of Humans as Experimental Subjects and determined to be IRB-exempt. The dataset contains data for 82,839 patients with cancer, with a median of 12 radiology notes each. Each radiology note focuses on one body area (e.g., chest CT scan). In addition, we have a subset of 135 patients with progressive lung cancer with 1095 labeled radiology notes. Each note was labeled post-hoc by a dedicated thoracic oncologist as `indicating progression' (19\%), `not indicating progression' (79.5\%),  or `ambiguous' (1.5\%).

\begin{figure*}
\centering
\begin{subfigure}{.65\textwidth}
  \centering
    \caption{Mean note-level AUC of regularized logistic regression over different dataset sizes. Averaged over the 5 folds, performance was optimal for each dataset size when restricted to the features with nonzero coefficients recovered by OCP.}
\begin{tabular}{l|lllll|}
\cline{2-6}  & \multicolumn{5}{c|}{\textit{Fraction of training data}}     \\ \hline
\multicolumn{1}{|l|}{\textit{Available features}} & \multicolumn{1}{l}{\textbf{1}} & \multicolumn{1}{l}{\textbf{1/2}} & \multicolumn{1}{l}{\textbf{1/4}} & \multicolumn{1}{l}{\textbf{1/8}} & \multicolumn{1}{l|}{\textbf{1/16}} \\ \hline
    \multicolumn{1}{|l|}{\textbf{OCP subset}}     & 0.864                           & 0.860                             & 0.847                             & 0.808                             & 0.786                             \\ \hline 
    \multicolumn{1}{|l|}{\textbf{All features}}   & 0.856                           & 0.851                             & 0.818                             & 0.723                             & 0.726                             \\\hline 

\multicolumn{1}{|l|}{\textbf{Most common}}  & 0.767                           & 0.767                             & 0.728                             & 0.687                             & 0.658                            \\
\hline
\multicolumn{1}{|l|}{\textbf{Random subset}}  & 0.740                           & 0.747                             & 0.727                             & 0.639                             & 0.634                            \\
\hline
\end{tabular}

  \label{fig:sub1}
\end{subfigure}%
\hfill
\begin{subfigure}{.30\textwidth}
  \centering
    \caption{Example features that OCP selected (top) or excluded (bottom) for downstream prediction.}
    \includegraphics[width=120pt]{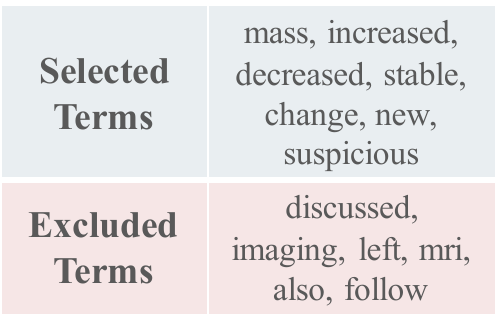}

  \label{fig:sub2}
\end{subfigure}
\setlength{\belowcaptionskip}{-10pt}
\caption{Linear representation space experiment to validate assumptions of our model apply to real-world data. Quantitatively, we find downstream wins from restricting the model feature space to those found useful for the order-contrastive task. Qualitatively, the features important for the order pre-training are the same we would expect to be useful for the downstream extraction task.}
\label{fig:test}
\end{figure*}

\paragraph{Experimental setup.} We investigate extraction of these binary progression labels from the \textit{Impression} section of the  note. The labeled data was split via 5-fold cross-validation: each fold contained sets of sizes 64\% (train), 16\% (validation), and 20\% (test); for a given fold, no patient examples were ever split between sets.
On each fold, we used the test set to benchmark models trained using different amounts of the labeled training data: from just 5 training patients ($\frac{1}{16}$) to all of the training patients. 
We excluded patients with downstream labels from pretraining. 
For contrastive pre-training schemes, a pretraining window pair was sampled once per each unique body area (e.g. chest, brain) that was scanned at least twice, capped at five locations per patient. 
This resulted in $\approx$158,000 samples for pretraining.

\paragraph{Pre-training for feature selection.}
We first validate our modeling assumptions from from Section \ref{sec:setup} using a linear model.
The goal of this section is to roughly validate our assumptions and the setup of our theoretical model.
We compare downstream progression extraction performance of (i) a vanilla logistic regression model and (ii) a logistic regression model only using the features selected by OCP.  We test on all five folds for five training dataset fractions.

For each experiment, our dataset is featurized using the unigrams and bigrams that occur in at least 5\% of the labeled training data set. They are vectorized using the \textit{term frequency-inverse document frequency} weighting scheme, via scikit-learn \citep[][BSD 3-clause license]{scikit-learn}. We conduct feature selection as an optional intermediate step preceding progression extraction. 
For OCP, we train a logistic regression model with L1 penalty over the 158,000 pre-training pairs of consecutive radiology notes. The regularization constant was set such that there were $50\pm5$ features with nonzero weights. 
In addition to OCP-derived features, we select the 50 most common features, and 5 random subsets of 50 features to serve as a comparison.

We train scikit-learn logistic regression models for downstream progression extraction over each feature set; further details are in Appendix \ref{apdx:experiments}. Results can be seen in Figure 3a. Even with a simple bag-of-words representation, feature selection with OCP outperforms \emph{directly} training a tuned logistic regression model on the available labeled data (``All features''), especially for small dataset sizes. A paired $t$-test finds that the model with OCP-selected features is significantly better than the direct-downstream model on a sixteenth of the data ($p<0.05$). Note that selecting the most common features or a random set of features does not compare, showing that OCP does not improve performance by simply reducing the feature dimension in a redundant space.

We manually examined the OCP-selected features and their coefficients (Figure \ref{fig:sub2}).
The features included (e.g. \textit{increased, decreased}) strongly indicate disease progression, while those discarded (e.g. \textit{discussed}) largely seem to be noise. 
Of the nonzero coefficients, 76\% have a positive weight; this indicates that the pre-training model focuses mostly on features that have been \textit{turned on} to conduct the ordering task, fitting with our motivating theoretical setting.

\begin{table*}[t]
\setlength{\tabcolsep}{4.5pt}
    \centering
        \caption{Performance of deep methods on cancer progression extraction. The first row contains the mean AUC of OCP $\pm$ its std dev. The following rows contain the mean AUC advantage of OCP over each comparison method, and the percentage of time OCP outperforms that method, across the 3 seeds and 5 folds.}
\begin{tabular}{l|ccccc|}
\cline{2-6}  & \multicolumn{5}{c|}{\textit{Fraction of training data}}     \\ \hline
\multicolumn{1}{|l|}{\textit{AUC diff. (OCP Win \%)}} & \multicolumn{1}{c}{\textbf{1}} & \multicolumn{1}{c}{\textbf{1/2}} & \multicolumn{1}{c}{\textbf{1/4}} & \multicolumn{1}{c}{\textbf{1/8}} & \multicolumn{1}{c|}{\textbf{1/16}} \\ \hline
\multicolumn{1}{|l|}{\textbf{OCP AUC}}   
& 0.87 $\pm$ .03    & 0.86 $\pm$ .04  &  0.84 $\pm$ .04 &  0.82 $\pm$ .03  & 0.81 $\pm$ .03        \\\hline  \cline{1-6}
\multicolumn{1}{|l|}{\textbf{OCP $-$ BERT}}   
& 0.08 (93\%)    & 0.12 (100\%)  & 0.12 (100\%) & 0.18 (100\%) & 0.22 (100\%)       \\\hline  
\multicolumn{1}{|l|}{\textbf{OCP $-$ FT LM}}   
& 0.03 (80\%)  & 0.04 (82\%) & 0.04 (82\%) &  0.08 (93\%) & 0.10 (89\%)     \\\hline  
\multicolumn{1}{|l|}{\textbf{OCP $-$ Pt-Contrastive}}   
& 0.03 (86\%)   & 0.03 (77\%) & 0.05 (91\%) & 0.09 (91\%) & 0.12 (97\%)  \\\hline  

\multicolumn{1}{|l|}{\textbf{OCP $-$ PCL}}   
& 0.00 (53\%)   & 0.00 (46\%) & 0.03 (64\%)  & 0.03 (76\%)   & 0.06 (87\%)   \\\hline  
\end{tabular}
\label{tab:msk_deep}
\end{table*}
\paragraph{Nonlinear representations.}
We now study the use of OCP for pre-training nonlinear representations.
We compare performance of a BERT model pre-trained using OCP to several other self-supervision methods. We investigate the BERT base model and the BERT base model after it is pre-trained using: (i) \textit{FT LM}: fine-tuned masked language modeling over an equivalent number of impressions, (ii)  \textit{Pt-Contrastive}: a patient-level contrastive objective (identical positive sampling to OCP and PCL, but each negative is a random note of the same note type from a \emph{different patient}, similar to \citet{diamant2021patient}), (iii) \textit{PCL}: contrastive pre-training with PCL sampling (each negative is a random pair of notes of the same type from the same patient), (iv) \textit{OCP}: contrastive pre-training with OCP sampling (each negative is a pair of notes of the same type in the incorrect order). 
All pre-training is conducted over three seeds, and all three contrastive objectives were trained with the same number of pairs (158,000). Implementation for 
language modeling and contrastive pre-training came from \citet[][Apache-2.0 License]{wolf-etal-2020-transformers} with full details in Appendix \ref{apdx:experiments}. 
After model pre-training/fine-tuning, the self-supervised representation layers were frozen, and a single L2-regularized linear layer was added on top. 
The goal of freezing was to isolate the effect of pre-training to understand representation quality, due to the instability of training BERT on small downstream tasks \citep{zhang2020revisiting}.

Results can be seen in Table 1. The top row shows that OCP has only a modest drop in performance even when trained on the data from just 5 patients ($\frac{1}{16}$). 
Since correlations exist in AUC across the 5 folds, 3 seeds, and 5 dataset sizes, standard statistical comparison testing is inappropriate. 
Instead, we present the mean increase in AUC from OCP, as well as the percentage of the time OCP outperformed the comparisons. Unsurprisingly, BERT alone (trained on non-clinical text) unsurprisingly does not perform well out-of-the-box; fine-tuning with language modeling improves performance, but still suffers in the low data regime. Among the contrastive objectives, the cross-patient objective is the weakest, which may follow since its pre-training task was the easiest (82\% accuracy on validation). It could rely on features that differed between patients, instead of being forced to focus on the temporal features that differed \emph{within} a patient's timeline. 
PCL performs equivalently to OCP at large data sizes, but at the smaller data set sizes, it loses to OCP a large majority of the time.

\section{Related work}
\label{sec:related}

\paragraph{Order pre-training.}

Others have found order-based self-supervision useful for more complex time-series data, but without theoretical study.
For example, learning the order of frames within a video yields representations useful for downstream activity classification \citep{fernando2015modeling, misra2016shuffle, lee2017unsupervised, wei2018learning}.
Most similar to our work, \citet{hyvarinen2017nonlinear} introduced \emph{permutation-contrastive learning} (PCL) and proved \emph{nonlinear} identifiability for representations learned using PCL in an ICA setting. 
That is, they gave distributional conditions where PCL provably recovers the ``correct'' nonlinear representation of the input given infinite unlabeled data.
Our theoretical and empirical results indicate that there can be nontrivial differences between OCP and PCL's downstream performance.
Deeper understanding of what data distributions and downstream tasks are ``right'' for PCL versus OCP (and for other contrastive sampling methods) is an interesting direction for future theoretical study.

\vspace{-0.5\baselineskip}
\paragraph{Pre-training for medical time-series.}
Several other pre-training objectives have been explored on clinical time-series data. 
A contrastive learning setup similar to our \emph{patient-contrastive} baseline has shown promising results on electrocardiograph signals \citep{diamant2021patient, kiyasseh2020clocs}. \citet{banville2021uncovering} studied a contrastive objective for electroencephalography signals, in which windows of a signal are judged to be similar if they occur within a certain time gap, and dissimilar if they are far away in time.
Intuitively, this objective is well-suited to data where the true representation ``changes slowly'' with time, as with \citet{franceschi2019} (discussed in Section \ref{sec:intro}). Other objectives include auto-encoding \citep{fox2019learning} and masked prediction over text and tabular data, to mixed results
\citep{steinberg2020language, huang2019clinicalbert, yoon2020vime, mcdermott2021pre}. 
Multi-task pre-training supplies improvements, but unlike our work, it relies on additional labeled data from closely-related downstream tasks \citep{mcdermott2021pre}.

\vspace{-0.7\baselineskip}
\paragraph{Self-supervision theory.}
Like our work, \citet{arora2019theoretical, liu2021contrastive} and \citet{tosh2020contrastive, tosh2021contrastive} give downstream finite-sample error bounds for representations learned using particular contrastive learning objectives.
Our motivating theoretical setting and proof techniques are simpler than the ones considered in these works, but we show that our setup in Section \ref{sec:setup} is (i) complex enough to allow for some of the same nontrivial behavior observed by contrastive methods in practice (Sections \ref{sec:pcl-diff}, \ref{subsec:synth-data}) and (ii) it has some practical applications (Section \ref{subsec:real-data}).

\section{Limitations and Conclusion}
\label{sec:conclusion}
We have shown both theoretically and empirically that order-contrastive pre-training is an effective self-supervised method for certain types of time-series data and downstream tasks. 
On real-world longitudinal health data, we find that representations from OCP significantly outperform others, including the similar PCL, in the small data regime. Concretely, being able to structure variables from longitudinal, label-scarce, data in health records could enable us to evaluate large scale retrospective datasets and potentially inform future clinical trials and patient care. 

However, OCP is not always suitable.
For example, cases of temporal leakage (e.g., the date in a note) can lead to weak OCP (and PCL) representations downstream, since they provide a shortcut during pre-training. 
While dates are straightforward to censor, more complex global nonstationarities irrelevant to downstream tasks would present a challenge to these methods. We additionally wish to emphasize that not all clinical tasks have time-irreversible expressions in the data (e.g., acute/temporary conditions, such as pregnancy), so the motivating model assumptions (particularly Assumption \ref{asmp:S-suitable}) should be considered before applying OCP. 

Our theoretical setup and results in Section \ref{sec:setup} also serve to highlight that contrastive pre-training methods can be very sensitive to the precise sampling details, and provide a simple model for studying these details that is still complex enough to capture some empirical phenomena.
This suggests that obtaining broader theoretical guidelines for selecting a contrastive distribution is an interesting direction for future work.

\subsubsection*{Acknowledgements}
We would like to thank Rebecca Boiarsky for helpful comments on the manuscript, Richard Do for advising as a subject matter expert on radiology reports, and the Institute of Advanced Study for their hospitality in hosting MA.
DS and HL were partially supported by AitF award CCF-1723344, and MA by a Takeda Fellowship. 
The authors acknowledge funding for Memorial Sloan Kettering Cancer Center received through the NIH/NCI institutional P30 CA008748 grant.




\clearpage

\bibliography{references}

\begin{thebibliography}{}

\bibitem[Abramovich and Grinshtein, 2018]{abramovich2018high}
Abramovich, F. and Grinshtein, V. (2018).
\newblock High-dimensional classification by sparse logistic regression.
\newblock {\em IEEE Transactions on Information Theory}, 65(5):3068--3079.

\bibitem[Banville et~al., 2021]{banville2021uncovering}
Banville, H., Chehab, O., Hyv{\"a}rinen, A., Engemann, D.-A., and Gramfort, A.
  (2021).
\newblock Uncovering the structure of clinical eeg signals with self-supervised
  learning.
\newblock {\em Journal of Neural Engineering}, 18(4):046020.

\bibitem[Bleackley and Kim, 2013]{agony}
Bleackley, J. and Kim, S. Y.~R. (2013).
\newblock The merit and agony of retrospective chart reviews: A medical
  student’s perspective.
\newblock {\em British Columbia Medical Journal}, 55:374--375.

\bibitem[Chuang et~al., 2020]{chuang2020debiased}
Chuang, C.-Y., Robinson, J., Lin, Y.-C., Torralba, A., and Jegelka, S. (2020).
\newblock Debiased contrastive learning.
\newblock In {\em NeurIPS}.

\bibitem[Diamant et~al., 2022]{diamant2021patient}
Diamant, N., Reinertsen, E., Song, S., Aguirre, A.~D., Stultz, C.~M., and
  Batra, P. (2022).
\newblock Patient contrastive learning: A performant, expressive, and practical
  approach to electrocardiogram modeling.
\newblock {\em PLOS Computational Biology}, 18(2):e1009862.

\bibitem[Fernando et~al., 2015]{fernando2015modeling}
Fernando, B., Gavves, E., Oramas, J.~M., Ghodrati, A., and Tuytelaars, T.
  (2015).
\newblock Modeling video evolution for action recognition.
\newblock In {\em Proceedings of the IEEE Conference on Computer Vision and
  Pattern Recognition}, pages 5378--5387.

\bibitem[Fox et~al., 2019]{fox2019learning}
Fox, I., Rubin-Falcone, H., and Wiens, J. (2019).
\newblock Learning through limited self-supervision: Improving time-series
  classification without additional data via auxiliary tasks.

\bibitem[Franceschi et~al., 2019]{franceschi2019}
Franceschi, J.-Y., Dieuleveut, A., and Jaggi, M. (2019).
\newblock Unsupervised scalable representation learning for multivariate time
  series.
\newblock In {\em Advances in Neural Information Processing Systems},
  volume~32.

\bibitem[Goroshin et~al., 2015]{goroshin2015unsupervised}
Goroshin, R., Bruna, J., Tompson, J., Eigen, D., and LeCun, Y. (2015).
\newblock Unsupervised learning of spatiotemporally coherent metrics.
\newblock In {\em Proceedings of the IEEE international conference on computer
  vision}, pages 4086--4093.

\bibitem[Huang et~al., 2019]{huang2019clinicalbert}
Huang, K., Altosaar, J., and Ranganath, R. (2019).
\newblock Clinicalbert: Modeling clinical notes and predicting hospital
  readmission.
\newblock {\em arXiv preprint arXiv:1904.05342}.

\bibitem[Hyv{\"a}rinen and Morioka, 2017]{hyvarinen2017nonlinear}
Hyv{\"a}rinen, A. and Morioka, H. (2017).
\newblock Nonlinear ica of temporally dependent stationary sources.
\newblock In {\em Artificial Intelligence and Statistics}, pages 460--469.
  PMLR.

\bibitem[Kehl et~al., 2019]{kehl2019assessment}
Kehl, K.~L., Elmarakeby, H., Nishino, M., Van~Allen, E.~M., Lepisto, E.~M.,
  Hassett, M.~J., Johnson, B.~E., and Schrag, D. (2019).
\newblock Assessment of deep natural language processing in ascertaining
  oncologic outcomes from radiology reports.
\newblock {\em JAMA oncology}, 5(10):1421--1429.

\bibitem[Kehl et~al., 2020]{kehl2020natural}
Kehl, K.~L., Xu, W., Lepisto, E., Elmarakeby, H., Hassett, M.~J., Van~Allen,
  E.~M., Johnson, B.~E., and Schrag, D. (2020).
\newblock Natural language processing to ascertain cancer outcomes from medical
  oncologist notes.
\newblock {\em JCO Clinical Cancer Informatics}, 4:680--690.

\bibitem[Kiyasseh et~al., 2021]{kiyasseh2020clocs}
Kiyasseh, D., Zhu, T., and Clifton, D.~A. (2021).
\newblock Clocs: contrastive learning of cardiac signals across space, time,
  and patients.
\newblock In {\em International Conference on Machine Learning}, pages
  5606--5615. PMLR.

\bibitem[Lee et~al., 2017]{lee2017unsupervised}
Lee, H.-Y., Huang, J.-B., Singh, M., and Yang, M.-H. (2017).
\newblock Unsupervised representation learning by sorting sequences.
\newblock In {\em Proceedings of the IEEE International Conference on Computer
  Vision}, pages 667--676.

\bibitem[Liu et~al., 2021]{liu2021contrastive}
Liu, B., Ravikumar, P., and Risteski, A. (2021).
\newblock Contrastive learning of strong-mixing continuous-time stochastic
  processes.
\newblock In {\em International Conference on Artificial Intelligence and
  Statistics}, pages 3151--3159. PMLR.

\bibitem[McDermott et~al., 2021]{mcdermott2021pre}
McDermott, M., Nestor, B., Kim, E., Zhang, W., Goldenberg, A., Szolovits, P.,
  and Ghassemi, M. (2021).
\newblock A comprehensive ehr timeseries pre-training benchmark.
\newblock In {\em Proceedings of the Conference on Health, Inference, and
  Learning}, CHIL '21, page 257–278, New York, NY, USA. Association for
  Computing Machinery.

\bibitem[Misra et~al., 2016]{misra2016shuffle}
Misra, I., Zitnick, C.~L., and Hebert, M. (2016).
\newblock Shuffle and learn: unsupervised learning using temporal order
  verification.
\newblock In {\em European Conference on Computer Vision}, pages 527--544.
  Springer.

\bibitem[Mobahi et~al., 2009]{mobahi2009deep}
Mobahi, H., Collobert, R., and Weston, J. (2009).
\newblock Deep learning from temporal coherence in video.
\newblock In {\em Proceedings of the 26th Annual International Conference on
  Machine Learning}, pages 737--744.

\bibitem[Mohri and Rostamizadeh, 2010]{mohri2010stability}
Mohri, M. and Rostamizadeh, A. (2010).
\newblock Stability bounds for stationary $\varphi$-mixing and $\beta$-mixing
  processes.
\newblock {\em Journal of Machine Learning Research}, 11(2).

\bibitem[Mohri et~al., 2018]{mohri2018foundations}
Mohri, M., Rostamizadeh, A., and Talwalkar, A. (2018).
\newblock {\em Foundations of machine learning}.
\newblock MIT press.

\bibitem[Pedregosa et~al., 2011]{scikit-learn}
Pedregosa, F., Varoquaux, G., Gramfort, A., Michel, V., Thirion, B., Grisel,
  O., Blondel, M., Prettenhofer, P., Weiss, R., Dubourg, V., Vanderplas, J.,
  Passos, A., Cournapeau, D., Brucher, M., Perrot, M., and Duchesnay, E.
  (2011).
\newblock Scikit-learn: Machine learning in {P}ython.
\newblock {\em Journal of Machine Learning Research}, 12:2825--2830.

\bibitem[Pierson et~al., 2019]{pierson2019inferring}
Pierson, E., Koh, P.~W., Hashimoto, T., Koller, D., Leskovec, J., Eriksson, N.,
  and Liang, P. (2019).
\newblock Inferring multidimensional rates of aging from cross-sectional data.
\newblock In {\em The 22nd International Conference on Artificial Intelligence
  and Statistics}, pages 97--107. PMLR.

\bibitem[Robinson et~al., 2021]{robinson2021contrastive}
Robinson, J., Chuang, C.-Y., Sra, S., and Jegelka, S. (2021).
\newblock Contrastive learning with hard negative samples.
\newblock {\em ICLR}.

\bibitem[Saunshi et~al., 2019]{arora2019theoretical}
Saunshi, N., Plevrakis, O., Arora, S., Khodak, M., and Khandeparkar, H. (2019).
\newblock A theoretical analysis of contrastive unsupervised representation
  learning.
\newblock In {\em International Conference on Machine Learning}, pages
  5628--5637. PMLR.

\bibitem[Shalev-Shwartz and Ben-David, 2014]{shalev2014understanding}
Shalev-Shwartz, S. and Ben-David, S. (2014).
\newblock {\em Understanding machine learning: From theory to algorithms}.
\newblock Cambridge university press.

\bibitem[Steinberg et~al., 2021]{steinberg2020language}
Steinberg, E., Jung, K., Fries, J.~A., Corbin, C.~K., Pfohl, S.~R., and Shah,
  N.~H. (2021).
\newblock Language models are an effective representation learning technique
  for electronic health record data.
\newblock {\em Journal of Biomedical Informatics}, 113:103637.

\bibitem[Tosh et~al., 2021a]{tosh2020contrastive}
Tosh, C., Krishnamurthy, A., and Hsu, D. (2021a).
\newblock Contrastive estimation reveals topic posterior information to linear
  models.
\newblock {\em Journal of Machine Learning Research}, 22(281):1--31.

\bibitem[Tosh et~al., 2021b]{tosh2021contrastive}
Tosh, C., Krishnamurthy, A., and Hsu, D. (2021b).
\newblock Contrastive learning, multi-view redundancy, and linear models.
\newblock In {\em Algorithmic Learning Theory}, pages 1179--1206. PMLR.

\bibitem[Wei et~al., 2018]{wei2018learning}
Wei, D., Lim, J.~J., Zisserman, A., and Freeman, W.~T. (2018).
\newblock Learning and using the arrow of time.
\newblock In {\em Proceedings of the IEEE Conference on Computer Vision and
  Pattern Recognition}, pages 8052--8060.

\bibitem[Wiskott and Sejnowski, 2002]{wiskott2002slow}
Wiskott, L. and Sejnowski, T.~J. (2002).
\newblock Slow feature analysis: Unsupervised learning of invariances.
\newblock {\em Neural computation}, 14(4):715--770.

\bibitem[Wolf et~al., 2020]{wolf-etal-2020-transformers}
Wolf, T., Debut, L., Sanh, V., Chaumond, J., Delangue, C., Moi, A., Cistac, P.,
  Rault, T., Louf, R., Funtowicz, M., Davison, J., Shleifer, S., von Platen,
  P., Ma, C., Jernite, Y., Plu, J., Xu, C., Scao, T.~L., Gugger, S., Drame, M.,
  Lhoest, Q., and Rush, A.~M. (2020).
\newblock Transformers: State-of-the-art natural language processing.
\newblock In {\em Proceedings of the 2020 Conference on Empirical Methods in
  Natural Language Processing: System Demonstrations}, pages 38--45, Online.
  Association for Computational Linguistics.

\bibitem[Xia and Yetisgen-Yildiz, 2012]{xia2012clinical}
Xia, F. and Yetisgen-Yildiz, M. (2012).
\newblock Clinical corpus annotation: challenges and strategies.
\newblock In {\em Proceedings of the Third Workshop on Building and Evaluating
  Resources for Biomedical Text Mining (BioTxtM'2012) in conjunction with the
  International Conference on Language Resources and Evaluation (LREC),
  Istanbul, Turkey}.

\bibitem[Yoon et~al., 2020]{yoon2020vime}
Yoon, J., Zhang, Y., Jordon, J., and van~der Schaar, M. (2020).
\newblock Vime: Extending the success of self-and semi-supervised learning to
  tabular domain.
\newblock {\em Advances in Neural Information Processing Systems}, 33.

\bibitem[Zhang et~al., 2021]{zhang2020revisiting}
Zhang, T., Wu, F., Katiyar, A., Weinberger, K.~Q., and Artzi, Y. (2021).
\newblock Revisiting few-sample {BERT} fine-tuning.
\newblock In {\em 9th International Conference on Learning Representations,
  {ICLR} 2021, Virtual Event, Austria, May 3-7, 2021}. OpenReview.net.

\end{thebibliography}

\clearpage
\appendix

\thispagestyle{empty}

\onecolumn \makesupplementtitle
\section{Theory details}
\label{apdx:theory}
In this section, we provide proofs for the simple model introduced in Section \ref{sec:setup}.
In particular, we prove the finite sample bound for OCP by first proving that the driver features $S$ are the \emph{unique optimal representation}, and then we apply standard uniform convergence arguments to obtain the bound \eqref{eqn:pretrain-bound}.

We also show how so-called \emph{bias} \citep{chuang2020debiased} in the distribution of negative samples can affect OCP.
In particular, suppose that instead of choosing windows $(W, W+1)$ as positive and $(W+1, W)$ as negatives, we chose $(W,W+1)$ as positives and a random pair $(W,W')$ with $|W-W'| = 1$ as negatives.
This is analogous to PCL's negative sampling, where some of the negatives are still in the correct order. 
The difference with PCL is just that all negatives are still consecutive.
For the model in Section \ref{sec:setup}, we prove that (i) in the infinite-data limit, this ``biased'' version recovers the same representation as $S$ (so in fact, the estimator obtained by minimizing this objective is \emph{not} biased in a statistical sense) and (ii) the bound on \emph{unlabeled} sample complexity required to find a good representation is worse for this biased version of OCP than for the unbiased version.
This gives more theoretical evidence for the value of \emph{de-biasing} the negative distribution in contrastive learning, where possible: \emph{even if it doesn't change the representation learned with infinite unlabeled data, de-biasing the contrastive distribution can improve unlabeled sample complexity.}

\subsection{Assumptions and example class of distributions}
Here we provide simple assumptions under which a set of features $S$ is the unique optimal solution to \eqref{eqn:pretraining-obj} for the model from Section \ref{sec:setup}.
For simplicity in this section, we only consider consecutive windows and the case where $\tau$ (the sequence length) is the same for all sequences, but all of our results generalize (with suitable modifications to these assumptions) to the case where the positive and negative distributions over windows are symmetric up to ordering of the elements (correct versus incorrect), and to the case with a distribution over $\tau$.\footnote{For example, if for all $T$ and $t \in \{0,\ldots, T-1\}$, $\P[X^t, X^{t+1} | \tau = T] = \P[X^t, X^{t+1} | \tau \ge t+1]$ (i.e., the distribution of any window pair only depends on the fact that the trajectory is still active, and not on the actual length) then assumptions remain the same.}

\begin{assumption}
\label{asmp:s-stays-active-apdx}
For all $i\in S$ and all $t$,
$\P[X^t_i = 1, X_i^{t+1} = 0] = 0.$
\end{assumption}

\begin{assumption}
\label{asmp:time-reversible-apdx}
For all $v \in \{0,1\}^d$ and $v' \in \{0,1\}^d$ with $v_S = v'_S$, and for all $t$, $\P[X^t = v, X^{t+1} = v'] = \P[X^t = v', X^{t+1} = v].$
\end{assumption}
\begin{assumption}
  \label{asmp:U-asmp-apdx}
  For all $U$ such that $S\not\subset U$, there exists $t$ and $v\in \{0,1\}^d$, $v'\in \{0,1\}^d$, with $v_{U\cap S} = v'_{U\cap S}$ and:
  \[
  \min(\P[X^t_S \subsetneq X^{t+1}_S, X^t_U = v_U, X^{t+1}_U = v'_U]; \P[X^t_S \subsetneq X^{t+1}_S, X^t_U = v'_U, X^{t+1}_U = v_U]) > 0.
  \]
\end{assumption}
\begin{assumption}
\label{asmp:S-suitable-apdx}
The features $S$ are suitable for the downstream classification task:
    \[
    \argmin_{f \in \cF} \P[f(X_S) \ne Y] = \argmin_{f \in \cF, g\in \cG} \P[(f\circ g)(X) \ne Y]
    \]
\end{assumption}

Now we prove that the example from Section \ref{sec:setup} satisfies these assumptions (in particular, Assumptions \ref{asmp:s-stays-active-apdx}-\ref{asmp:U-asmp-apdx}).

\begin{theorem}
\label{thm:asmp-example}
Partition the indices $[d]$ into three sets $(S, \hat{S}, B)$ satisfying the following assumptions:
\begin{itemize}[beginpenalty=10000]
    \item Time-irreversible features $S$ satisfying Assumption \ref{asmp:s-stays-active}. We also assume that each $i\in S$ has a nonzero probability of activating on its own, without the other features in $S$.
    That is, for each $i\in S$ there exists $t\in [\tau-1]$, with $\P[X^t_i = 0, X^{t+1}_i = 1, X^t_{S \setminus \{i\}} = X^{t+1}_{S \setminus \{i\}}] > 0$.
    \item Noisy versions $\hat{S}$ of $S$: for each $j\in \hat{S}$, there exists $i \in S$ with $\P[X^t_j = X^t_i] = (1-\epsilon_i)$, with $\epsilon_i > 0$, for all $t$. Additionally, $X^t_i$ is conditionally independent of the other variables (for all times) given its driver $X^t_i$: $\P[X^t_j, X^1_{\mathcal{I}_1},\ldots, X^\tau_{\mathcal{I}_{\tau}} | X^t_i] = \P[X^t_j| X^t_i]\P[X^1_{\mathcal{I}_1},\ldots, X^\tau_{\mathcal{I}_{\tau}} | X^t_i]$, where $\mathcal{I}_{\hat{t}} = [d]\setminus j$ if $\hat{t} = t$ and $[d]$ otherwise.
    \fxnote{this is not precisely the right assumption; be more careful notationally to include all times}
    \item Background, reversible features $B$: features such that for all $t$ and all $v,v' \in \{0,1\}^d$, 
    \begin{align*}
    &\P\left[(X^t_B, X^t_{[d]\setminus B}) = (v_B, v_{[d]\setminus B}), (X^{t+1}_B, X^{t+1}_{[d]\setminus B}) = (v'_B, v'_{[d]\setminus B})\right] = \\
    &\P\left[(X^t_B, X^t_{[d]\setminus B}) = (v'_B, v_{[d]\setminus B}), (X^{t+1}_B, X^{t+1}_{[d]\setminus B}) = (v_B, v'_{[d]\setminus B})\right]
    \end{align*}
\end{itemize}
This class of distributions satisfies Assumption \ref{asmp:s-stays-active-apdx}-\ref{asmp:U-asmp-apdx}.
\end{theorem}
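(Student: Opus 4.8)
The plan is to verify the three assumptions one at a time, since the partition $(S,\hat{S},B)$ in the hypothesis of Theorem~\ref{thm:asmp-example} was essentially engineered so that each piece of the definition maps to one assumption. For Assumption~\ref{asmp:s-stays-active}, there is nothing to do: the statement ``$S$ satisfies Assumption~\ref{asmp:s-stays-active}'' is literally part of the hypothesis. So the content lies in Assumptions~\ref{asmp:time-reversible} and~\ref{asmp:U-asmp}.

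For Assumption~\ref{asmp:time-reversible}, I need to show that for any $v,v'\in\{0,1\}^d$ with $v_S = v'_S$, the pair-probability $\P[X^t=v, X^{t+1}=v']$ is symmetric under swapping $v\leftrightarrow v'$. The idea is to condition on the trajectory of the driver features $X_S$ (at times $t$ and $t+1$, and possibly all times) and use the two remaining structural properties. Since $v_S = v'_S$, conditioning on $X^t_S = v_S = v'_S$ and $X^{t+1}_S = v_S = v'_S$ fixes the same event on both sides. Given this conditioning, the noisy features $\hat{S}$ factorize as a product of per-feature conditionals $\P[X^t_j \mid X^t_i]$ by the conditional-independence assumption, and crucially these conditionals depend only on whether $X^t_j$ agrees with its driver $X^t_i$ — which is symmetric in $t$ versus $t+1$ when the driver value is held fixed across both times (indeed, when $X^t_i = X^{t+1}_i$, the factor for each $j\in\hat{S}$ is the same regardless of the time, so swapping does nothing). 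The background features $B$ contribute the symmetry directly from their defining identity (applied with the $[d]\setminus B$ coordinates frozen to the common $S\cup\hat{S}$ values). Assembling: write $\P[X^t=v,X^{t+1}=v']$ as a sum over driver trajectories of (driver probability) $\times$ (product over $\hat{S}$ of noise factors) $\times$ ($B$-conditional), observe each of the three factors is invariant under $v\leftrightarrow v'$, and conclude. The one subtlety to handle carefully is the case where $v_S$ has some coordinate equal to $1$ at time $t$ but we still need consistency with Assumption~\ref{asmp:s-stays-active} — but that is automatic since $v_S = v'_S$ forces $X^t_S = X^{t+1}_S$ on the conditioning event, so no irreversibility constraint is violated, and the marginalization over the driver only ranges over trajectories with $X^t_S = X^{t+1}_S = v_S$.

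For Assumption~\ref{asmp:U-asmp}, I need: for every $U$ with $S\not\subset U$, some time $t$ and some $v,v'$ with $v_{U\cap S} = v'_{U\cap S}$ such that both $\P[X^t_S \subsetneq X^{t+1}_S,\ X^t_U = v_U,\ X^{t+1}_U = v'_U]$ and the swapped version are strictly positive. Pick $i\in S\setminus U$ (exists since $S\not\subset U$). Use the ``activates on its own'' hypothesis: there is a time $t$ with $\P[X^t_i=0, X^{t+1}_i=1, X^t_{S\setminus\{i\}} = X^{t+1}_{S\setminus\{i\}}] > 0$. On this event $X^t_S \subsetneq X^{t+1}_S$ holds (feature $i$ turns on, the rest of $S$ is unchanged), and since $i\notin U$, the coordinates of $U$ that lie in $S$, namely $U\cap S$, are unchanged between $t$ and $t+1$. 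Now I want to choose the $U$-values $v_U,v'_U$ so that both orderings have positive probability; the natural move is to take $v_U = v'_U$ restricted to $U\cap S$ (forced, and consistent since those coords don't change), and for the $U$-coordinates outside $S$ (i.e. in $\hat{S}\cup B$) pick values that have positive conditional probability given the driver event. The background features can take any value with positive probability under an appropriate choice (the reversibility identity lets me symmetrize), and the noisy features have positive conditional probability of matching or mismatching their driver since $0<\epsilon_i<1$; either $v_U = v'_U$ outright works, or I symmetrize over the non-$S$ part of $U$ using the same factorization-plus-reversibility argument from the previous paragraph to get both orderings.

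The main obstacle I anticipate is the bookkeeping in Assumption~\ref{asmp:time-reversible}: cleanly decomposing $\P[X^t=v,X^{t+1}=v']$ into the three mutually-respecting factors and arguing each is swap-invariant requires being careful about the order of conditioning (the background identity conditions on all non-$B$ coordinates simultaneously, while the $\hat{S}$ factorization conditions only on drivers), and about the fact that we do \emph{not} have full independence among $S$, $\hat{S}$, $B$ — only the specific conditional independences stated. I would organize this as a short lemma: conditioned on the full driver trajectory $X_{S}^{1:\tau}$ with $X^t_S = X^{t+1}_S$, the joint law of $(X^t_{[d]\setminus S}, X^{t+1}_{[d]\setminus S})$ is symmetric under the time-swap, because it splits as (product over $\hat S$ of symmetric factors) $\times$ ($B$-part, symmetric by its defining identity with the $\hat S$-coordinates frozen). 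Everything else then follows by integrating out the driver. Assumption~\ref{asmp:U-asmp} is mostly an exercise in picking the right witnesses, so I expect it to be shorter.
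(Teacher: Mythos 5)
Your proposal follows the paper's proof essentially step for step: Assumption \ref{asmp:s-stays-active} is immediate from the hypothesis; Assumption \ref{asmp:time-reversible} is obtained, exactly as in the paper, by factoring out the $\hat{S}$-conditionals via the conditional-independence assumption, swapping the $B$-block using the reversibility identity (after marginalizing over $\hat{S}$), and swapping the $\hat{S}$-factors using $v_S = v'_S$; and Assumption \ref{asmp:U-asmp} is obtained by fixing $i \in S\setminus U$, invoking the activates-on-its-own hypothesis for one ordering, and building the reversed witness by leaving $U\cap S$ untouched, swapping $U\cap B$ via reversibility, and using $\epsilon_j \in (0,1)$ to control the noisy coordinates. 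The only detail your ``either $v_U=v'_U$ outright works, or I symmetrize'' phrasing leaves implicit is that the two devices must be combined, not chosen between: for noisy copies of $i$ itself lying in $U$ symmetrization is unavailable (their driver flips from $0$ to $1$ between $t$ and $t+1$), so one must fix equal values at both times as the paper does by setting $v_j=v'_j=0$, whereas for $U\cap B$ equal values may have probability zero (e.g.\ a periodic background feature), so there the reversibility swap is the device that must be used.
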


\begin{proof}
Assumption \ref{asmp:s-stays-active-apdx} is satisfied by definition.
For Assumption \ref{asmp:time-reversible-apdx}, fix $v, v' \in \{0,1\}^d$ with $v_S = v'_S$ and $t \in [\tau-1]$.
Then we have:
\[
\P[X^t = v, X^{t+1} = v'] = \P[X^t_S = v_S, X^{t+1}_S = v'_S, X^t_{\hat{S}} = v_{\hat{S}}, X^{t+1}_{\hat{S}} = v'_{\hat{S}}, X^t_B = v_B, X^{t+1}_B = v'_B].
\]
Using the conditional independence assumption for the features in $\hat{S}$, the right-hand-side factors to:
\[
\P[X^t_S = v_S, X^{t+1}_S = v'_S, X^t_B = v_B, X^{t+1}_B = v'_B]\P[X^t_{\hat{S}} = v_{\hat{S}}|X^t_S=v_S]\P[X^{t+1}_{\hat{S}} = v'_{\hat{S}}|X^{t+1}_S=v'_S].
\]
Because $v_S = v'_S$, trivially this is equal to:
\[
\P[X^t_S = v'_S, X^{t+1}_S = v_S, X^t_B = v_B, X^{t+1}_B = v'_B]\P[X^t_{\hat{S}} = v_{\hat{S}}|X^t_S=v_S]\P[X^{t+1}_{\hat{S}} = v'_{\hat{S}}|X^{t+1}_S=v'_S].
\]
The reversibility of the features in $B$ (and summing over the full joint distribution) imply that the above is equal to:
\[
\P[X^t_S = v'_S, X^{t+1}_S = v_S, X^t_B = v'_B, X^{t+1}_B = v_B]\P[X^t_{\hat{S}} = v_{\hat{S}}|X^t_S=v_S]\P[X^{t+1}_{\hat{S}} = v'_{\hat{S}}|X^{t+1}_S=v'_S].
\]
Finally, note that because $v_S = v'_S$, $X^t_{\hat{S}}$ and $X^{t+1}_{\hat{S}}$ are identically distributed by the definition of features in $\hat{S}$, so:
\begin{align*}
\P[X^t_{\hat{S}} = v_{\hat{S}}|X^t_S=v_S]\P[X^{t+1}_{\hat{S}} = v'_{\hat{S}}|X^{t+1}_S=v'_S] = \\
\P[X^t_{\hat{S}} = v'_{\hat{S}}|X^t_S=v'_S]\P[X^{t+1}_{\hat{S}} = v_{\hat{S}}|X^{t+1}_S=v_S].
\end{align*}
Combining with the previous equation and simplifying, we obtain
\[
\P[X^t = v, X^{t+1} = v'] = \P[X^t = v', X^{t+1} = v],
\]
which is Assumption \ref{asmp:time-reversible-apdx}.

Finally, for Assumption \ref{asmp:U-asmp-apdx}, fix $U\subset [d]$ with $S \not \subset U$, and fix some $i \in S\setminus U$.
We assumed that each $i \in S$ has some probability of activating on its own, so there exists $t$ with $\P[X^t_i = 0, X^{t+1}_i = 1, X^t_{S\setminus \{i\}} = X^{t+1}_{S\setminus \{i\}}] > 0$.
In particular, this implies that there exist $v, v'$ with $v_i = 0$, $v'_i = 1$, $v_{S\setminus \{i\}} = v'_{S\setminus \{i\}}$ and $\P[X^t = v, X^{t+1}= v'] > 0$.
By summing over variables in this joint distribution, that implies $\P[X^t_i = 0, X^{t+1}_i = 1, X^t_{U} = v_U, X^{t+1}_U = v'_U] > 0$, so $\P[X^t_S \subsetneq X^{t+1}_S, X^t_{U} = v_U, X^{t+1}_U = v'_U] > 0.$
Now we need to reverse $v$ and $v'$ for the $U$ indices to account for the other term in the $\min$ of Assumption \ref{asmp:U-asmp-apdx}.
For each $j \in \hat{S}$, we can take $v_j = v'_j = 0$ while maintaining $\P[X^t = v, X^{t+1} = v'] > 0$ and $\P[X^t_S \subsetneq X^{t+1}_S, X^t_{U} = v_U, X^{t+1}_U = v'_U] > 0$, since we assumed that all values of $\epsilon_j > 0$.
We know that $\P[X^t = v, X^{t+1} = v'] > 0$, so we will try to rearrange $\P[X^t_U = v'_U, X^{t+1}_U = v_U, X^t_{S} \subsetneq X^{t+1}_S]$ to make use of this fact by attempting to switch $v$ for $v'$.

We have: \[
\P[X^t_U = v'_U, X^{t+1}_U = v_U, X^t_{S} \subsetneq X^{t+1}_{S}] \ge \P[X^t_U = v'_U, X^{t+1}_U = v_U, X^t_{S} = v_{S}, X^{t+1}_{S} = v'_{S}],
\]
because $v_i =0,\ v'_i = 1$. 
Let $\mathcal{J}$ be the set of indices in $\hat{S}$ corresponding to noisy indicators of feature $i$.
Because we chose $v,v'$ so that $v_{S\setminus\{i\}} = v'_{S\setminus \{i\}}$, all features in $\hat{S}\setminus \mathcal{J}$ are identically distributed at times $t$ and $t+1$.
Therefore, we can switch $v$ and $v'$ for some of the indices in $U$:
\begin{align*}
\P[X^t_U = v'_U, X^{t+1}_U = v_U, X^t_{S\setminus U} = v_{S\setminus U}, X^{t+1}_{S\setminus U} = v'_{S\setminus U}] = \P[&X^t_{U\cap S} = v_{U\cap S}, X^{t+1}_{U \cap S} = v'_U,\\
&X^t_{U\cap \hat{S} \setminus \mathcal{J}} = v_{U\cap \hat{S} \setminus \mathcal{J}}, X^{t+1}_{U\cap \hat{S} \setminus \mathcal{J}} = v'_{U\cap \hat{S} \setminus \mathcal{J}},\\
&X^t_{U\cap B} = v'_{U\cap B}, X^{t+1}_{U\cap B} = v_{U\cap B},\\
&X^t_{U\cap \mathcal{J}} = v'_{U\cap \mathcal{J}}, X^{t+1}_{U\cap \mathcal{J}} = v_{U\cap \mathcal{J}},\\
&X^t_{S\setminus U} = v_{S\setminus U}, X^{t+1}_{S\setminus U} = v'_{S\setminus U}]
\end{align*}
By the definition of the features in $B$, we can also switch $v$ and $v'$ for $U\cap B$.
The only remaining terms to switch are $X^t_{U\cap \mathcal{J}}$ and $X^{t+1}_{U\cap \mathcal{J}}$. 
Because we took $v_j = v'_j = 0$ for all $j \in \mathcal{J}$ without loss of generality, and $v_i = 0$, $v'_i = 1$, we have:
\begin{align*}
\P[X^t_{\mathcal{J}} = v'_{\mathcal{J}} | X^t_i = v_i = 0] = \prod_{j\in     \mathcal{J}}(1-\epsilon_j) = \P[X^t_{\mathcal{J}} = v_{\mathcal{J}} | X^t_i = v_i = 0]\\
\P[X^{t+1}_{\mathcal{J}} = v_{\mathcal{J}} | X^{t+1}_i = v'_i = 1] = \prod_{j\in     \mathcal{J}}\epsilon_j = \P[X^{t+1}_{\mathcal{J}} = v'_{\mathcal{J}} | X^{t+1}_i = v'_i = 1].
\end{align*}
Hence, we can finally combine:
\begin{align*}
\P[X^t_U = v'_U, X^{t+1}_U = v_U, X^t_{S\setminus U} = v_{S\setminus U}, X^{t+1}_{S\setminus U} = v'_{S\setminus U}] = \P[&X^t_{U\cap S} = v_{U\cap S}, X^{t+1}_{U \cap S} = v'_U,\\
&X^t_{U\cap \hat{S} \setminus \mathcal{J}} = v_{U\cap \hat{S} \setminus \mathcal{J}}, X^{t+1}_{U\cap \hat{S} \setminus \mathcal{J}} = v'_{U\cap \hat{S} \setminus \mathcal{J}},\\
&X^t_{U\cap B} = v_{U\cap B}, X^{t+1}_{U\cap B} = v'_{U\cap B},\\
&X^t_{U\cap \mathcal{J}} = v_{U\cap \mathcal{J}}, X^{t+1}_{U\cap \mathcal{J}} = v'_{U\cap \mathcal{J}},\\
&X^t_{S\setminus U} = v_{S\setminus U}, X^{t+1}_{S\setminus U} = v'_{S\setminus U}]
\end{align*}
So we have shown for these $v,v'$ that: 
\begin{align*}
\P[X^t_U = v'_U, X^{t+1}_U = v_U, X^t_{S\setminus U} = v_{S\setminus U}, X^{t+1}_{S\setminus U} = v'_{S\setminus U}] = \P[&X^t_U = v_U, X^{t+1}_U = v'_U, \\
&X^t_{S\setminus U} = v_{S\setminus U}, X^{t+1}_{S\setminus U} = v'_{S\setminus U}].
\end{align*}
But we know the RHS is positive because $v,v'$ had $\P[X^t = v, X^{t+1} = v'] > 0$.
Therefore,
\begin{align*}
\P[X^t_U = v'_U, X^{t+1}_U = v_U, X^t_S \subsetneq X^{t+1}_{S}] &\ge \P[X^t_U = v'_U, X^{t+1}_U = v_U, X^t_{S\setminus U} = v_{S\setminus U}, X^{t+1}_{S\setminus U} = v_{S\setminus U}]\\
&> 0.
\end{align*}
So we have shown that both 
$\P[X^t_S \subsetneq X^{t+1}_S, X^t_U = v_U, X^{t+1}_U = v'_U]$ and $\P[X^t_S \subsetneq X^{t+1}_S, X^t_U = v'_U, X^{t+1}_U = v_U]$ are strictly positive, which gives Assumption \ref{asmp:U-asmp-apdx}.
\end{proof}

\subsection{Optimal representations and finite-sample guarantee}
This first lemma shows that the feature set $S$ is \emph{an} optimal representation for the OCP objective.
\begin{lemma}
\label{lem:S-optimal}
Let \[
h^*(v,v') = \argmax_{y\in\{-1,1\}} \P[Y=y | X^W = v, X^{W'} = v']
\]
be the Bayes-optimal classifier for the OCP objective.
There exists a classifier $h(v_S,v'_S)$, depending only on the $S$ coordinates of $v$ and $v'$, achieving the same OCP error as $h^*$.
\end{lemma}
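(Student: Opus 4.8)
The plan is to produce an explicit classifier $h$ that depends only on $(v_S,v'_S)$ and show that $h$ is \emph{itself} Bayes-optimal for the OCP objective; since the Bayes risk $R_{ord}(h^*)$ is the minimum achievable, $h$ then attains the same OCP error as $h^*$. The candidate is the ``subset rule'': $h(v_S,v'_S)=+1$ if $v_S\subseteq v'_S$ coordinatewise, and $h(v_S,v'_S)=-1$ otherwise. Intuitively, Assumption \ref{asmp:s-stays-active} says the $S$-features can only switch from $0$ to $1$ in the forward direction, so a correctly-ordered pair $(X^t,X^{t+1})$ always has $X^t_S\subseteq X^{t+1}_S$; the rule simply predicts ``correct order'' exactly when this necessary condition holds.

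The key steps: first unfold the OCP sampling process to write the class-conditional laws. With $T$ the uniform time index, $\P[X^W=v,X^{W'}=v'\mid Y=1]=\tfrac{1}{\tau-1}\sum_t \P[X^t=v,X^{t+1}=v']$ and $\P[X^W=v,X^{W'}=v'\mid Y=-1]=\tfrac{1}{\tau-1}\sum_t \P[X^t=v',X^{t+1}=v]$, with uniform prior on $\{-1,1\}$. Then split on the relation between $v_S$ and $v'_S$. (i) If $v_S\subsetneq v'_S$, then Assumption \ref{asmp:s-stays-active} forces $X^t_S\subseteq X^{t+1}_S$ for every $t$, so every summand of the $Y=-1$ law vanishes; hence on any $(v,v')$ with positive probability the posterior concentrates on $Y=+1$, and $h=+1$ is optimal there. (ii) If $v'_S\subsetneq v_S$, symmetrically $Y=-1$ is certain and $h=-1$ is optimal. (iii) If $v_S=v'_S$, Assumption \ref{asmp:time-reversible} makes the two class-conditional laws equal term by term, so with the uniform prior the posterior is exactly $1/2$ and \emph{every} label is risk-optimal at $(v,v')$, in particular $h=+1$. (iv) If $v_S,v'_S$ are incomparable, both class-conditional laws vanish by the same argument as (i)--(ii), so $(v,v')$ has probability zero and the value of $h$ is irrelevant. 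In all cases the conditional error $\P[Y\ne h(v_S,v'_S)\mid X^W=v,X^{W'}=v']$ equals $\min_y\P[Y=y\mid X^W=v,X^{W'}=v']$, i.e. $h$ matches the Bayes posterior pointwise (with arbitrary tie-breaking on the $1/2$-set). Integrating over the law of $(X^W,X^{W'})$ then gives $R_{ord}(h)=R_{ord}(h^*)$, and $h$ depends only on $(v_S,v'_S)$ by construction, which is the claim.

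The step I would be most careful about is the probability-zero bookkeeping in cases (i), (ii), (iv): the argument that the ``wrong'' class-conditional law is \emph{exactly} zero there uses Assumption \ref{asmp:s-stays-active} holding for all $t$, and one must discard these null configurations before comparing risks, since they contribute nothing to $R_{ord}$. Everything else is the routine observation that a classifier agreeing with the Bayes posterior at every positive-probability point (including on the tie set where the posterior is $1/2$) achieves the Bayes risk.
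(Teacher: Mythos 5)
Your proposal is correct and follows essentially the same route as the paper's proof: use Assumption \ref{asmp:s-stays-active} to show the posterior is degenerate on proper-containment pairs (and that incomparable pairs are null), use Assumption \ref{asmp:time-reversible} to show the posterior is exactly $1/2$ when $v_S = v'_S$, and conclude that the subset rule on the $S$ coordinates attains the Bayes error. The only difference is your tie-breaking choice ($+1$ instead of the paper's $-1$ when $v_S = v'_S$), which is immaterial since the posterior is $1/2$ there.
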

\begin{proof}
First we compute the error of the Bayes-optimal classifier.
For $A \subset [d]$, $v,v' \in \{0,1\}^d$, we say $v_A \subsetneq v'_A$ if $\{i \in A : v_i = 1\} \subsetneq \{i \in A : v'_i = 1\}$.
Assumption \ref{asmp:s-stays-active} implies that:
\[
\P[Y=1 | X^W, X^{W'}] = \begin{cases}
    1 & X^W_S \subsetneq X^{W'}_S\\
    0 & X^{W'}_S \subsetneq X^{W}_S.
    \end{cases}
\]
That is, given windows $(X^W, X^{W'})$, if the set of $S$ variables active in $X^W$ is properly contained in the set of $S$ variables active in $X^{W'}$, we know the windows must be in the correct order.
Likewise, if the active $S$ variables in $X^{W'}$ are properly contained in those of $X^W$, the windows must be in the wrong order.
Assumption \ref{asmp:s-stays-active} also implies that the only possible cases are $X^W_S \subsetneq X^{W'}_S$, $X^{W'}_S \subsetneq X^{W}_S$, and $X^W_S = X^{W'}_S$.
For the last case, we have:
\[
\P[Y = 1 | X^W, X^{W'}] = \frac{\P[X^W, X^{W'} | Y=1]}{\P[X^W, X^{W'} | Y=1] + \P[X^W, X^{W'} | Y=-1]}
\]
Assumption \ref{asmp:time-reversible} implies that the two terms in the denominator are equal when $X^W_S = X^{W'}_S$.
This implies that when $X^W_S = X^{W'}_S$, $\P[Y = 1 |X^W, X^{W'}] = 1/2$, so the Bayes error is:
\[
\P[Y \ne h^*(X^W, X^{W'})] = \frac{1}{2}\P[X^W_S = X^{W'}_S].
\]
Hence, we can define:
\begin{equation}
    \label{eqn:h-defn}
h(v_S, v'_S) = \begin{cases}
1 & v_S \subsetneq v'_S\\
-1 & \mbox{otherwise},
\end{cases}
\end{equation}
where we've made the arbitrary choice of $-1$ when $v_S = v'_S$.
Then we have:
\[
\P[Y \ne h(X^W, X^{W'})] = \frac{1}{2}\P[X^W_S = X^{W'}_S].
\]
So we can achieve the Bayes error using only the coordinates in $S$.
Additionally, note that $h(v_S, v'_S)$ can easily be written as a linear function over $2|S|$ coordinates.
If we let $z = (v_{S_1},\ldots,v_{S_{|S|}},v'_{S_1},\ldots,v'_{S_{|S|}})$ (i.e., the first $|S|$ coordinates represent $v_S$ and the second $|S|$ $v'_S$), then $h(v_S, v'_S) = h(z) = \operatorname{Sign}(\sum_{i=|S|+1}^{2|S|}z_i - \sum_{i=1}^{|S|}z_i),$ where we break ties with $-1$ in accordance with \eqref{eqn:h-defn}.
This means that we can safely take $\cC$ to be the set of linear functions (in fact, we've just shown one linear function will always suffice), so $\VC(\cC)$ is bounded.
\end{proof}

Recall that we are searching over hypotheses $h = (c,g)$ with $h(v,v') = h(g(v), g(v'))$. When $g$ is a feature selector (i.e., $g$ picks a certain set of features) and $c$ is unconstrained, the previous lemma immediately implies we can take $g = S$.
\begin{corollary}
The feature selector $g$ that selects the features $S$ is an optimal representation for the OCP objective.
\end{corollary}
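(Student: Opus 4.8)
The plan is to derive this as an immediate consequence of Lemma~\ref{lem:S-optimal}. The only two facts needed are that the feature set $S$ is itself an admissible representation, i.e.\ $S \in \cG$, and that every hypothesis $h = (c,g) \in \cC \times \cG$ induces a function of the raw pair $Z = (X^W, X^{W'})$, so its OCP risk is lower-bounded by the Bayes risk of $h^*$ computed in the proof of Lemma~\ref{lem:S-optimal}.

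First I would note that since $|S| = d_0$ by assumption, $S$ is a valid element of $\cG = \{U \subset [d] : |U| = d_0\}$; identifying $S$ with projection onto its coordinates gives a feature-selector representation, call it $g_S$. Next, Lemma~\ref{lem:S-optimal} produces a classifier $h(\cdot,\cdot)$ acting only on the $S$-coordinates of the two windows that attains the Bayes-optimal OCP error $\tfrac12 \P[X^W_S = X^{W'}_S]$, and the lemma further shows this classifier is $\operatorname{Sign}$ of a fixed linear functional of the $2|S|$ selected coordinates, hence an element $c \in \cC$ whenever $\cC$ contains linear threshold functions (which is exactly why the paper takes $\cC$ to be the linear hypotheses). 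Thus the composed hypothesis $(c, g_S)$ lies in $\cC \times \cG$ and achieves $R_{ord}((c,g_S)) = \tfrac12 \P[X^W_S = X^{W'}_S]$, which is the Bayes risk over \emph{all} functions of $Z$.

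Finally I would close the argument by observing that for any competing $h = (c', g') \in \cC \times \cG$, the map $(v,v') \mapsto c'(g'(v), g'(v'))$ is a particular function of $Z$, so $R_{ord}(h) = \P[h(Z) \ne Y] \ge \P[h^*(Z) \ne Y] = R_{ord}((c, g_S))$. Hence $g_S$ minimizes $\min_{c \in \cC} R_{ord}((c, g))$ over $g \in \cG$, i.e.\ $S$ is an optimal representation for the OCP objective.

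There is essentially no obstacle here beyond bookkeeping. The only points requiring care are (i) confirming that the tie-breaking linear classifier from Lemma~\ref{lem:S-optimal} genuinely lives in the classifier class $\cC$ under consideration, and (ii) spelling out the Bayes-optimality lower bound --- that no choice of $(c', g')$ can beat $h^*$ because $h^*$ is optimal against \emph{all} predictors measurable with respect to $(X^W, X^{W'})$, while $(c', g')$ only ever uses a coarsening of that information. Both are routine, so the corollary follows directly from the lemma.
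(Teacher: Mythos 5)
Your proposal is correct and matches the paper's own route: the paper derives the corollary immediately from Lemma~\ref{lem:S-optimal}, since the Bayes-optimal error is attained by a (linear) classifier depending only on the $S$-coordinates, and no pair $(c',g')$ can beat the Bayes risk because it is a function of $(X^W,X^{W'})$. Your extra bookkeeping (checking $S\in\cG$ and that the tie-breaking linear classifier lies in $\cC$) just makes explicit what the paper leaves implicit.
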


Now we show that $S$ is the \emph{unique} optimal choice of features.
This lemma gives a clean expression for the error incurred by choosing a set of features $U$ potentially different from $S$.
\begin{lemma}
\label{lemma:S-unique}
For $U \subset [d]$, define the error of $U$ as the best OCP loss achievable when using the features in the set $U$:
\[
\err(U) = \inf_{c \text{ measurable}} \P[Y \ne c(X^W_U, X^{W'}_U)].
\]
For any $p, q \in \{0,1\}^{|U|}$ with $p_{U\cap S} = q_{U\cap S}$, define:
\[
m_U(p,q) = \min\left(\P[X^W_S \subsetneq X^{W'}_S | X^W_U = p, X^{W'}_U=q]; \P[X^{W'}_S \subsetneq X^{W}_S | X^W_U = p, X^{W'}_U=q]\right).
\]
The expected value of $m_U(p,q)$ is given by:
\[
\E_{p,q}[m_U(p,q)] = \sum_{p,q \in \{0,1\}^{|U|}}\mathbb{I}[p_{U\cap S} = q_{U\cap S}]\P[X^W_U=p, X^{W'}_U = q]m_U(p,q)
\]
Then for any $U$,
\begin{align*}
\err(U) &= \frac{1}{2}\P[X^W_S = X^{W'}_S] + \E_{p,q}[m_U(p,q)]\\
&= \err(S) + \E_{p,q}[m_U(p,q)]
\end{align*}
\end{lemma}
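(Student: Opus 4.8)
The plan is to compute the restricted Bayes error $\err(U)$ pointwise and then resolve the residual ambiguity using the $S$-coordinates. First I would observe that $\err(U) = \E_{p,q}\bigl[\min(\eta_U(p,q),\,1-\eta_U(p,q))\bigr]$, where $\eta_U(p,q) := \P[Y=1 \mid X^W_U = p,\, X^{W'}_U = q]$ and the outer expectation is over the law of $(X^W_U, X^{W'}_U)$; the infimum over measurable $c$ is attained by the Bayes rule $c^*(p,q) = \mathbb{I}[\eta_U(p,q) \ge \tfrac12]$. So the whole statement reduces to an explicit expression for $\eta_U(p,q)$.

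The key structural inputs are as follows. In OCP the sampled window pair is always \emph{consecutive} (forward when $Y=1$, reversed when $Y=-1$), so Assumption \ref{asmp:s-stays-active} forces the active $S$-coordinates of the two windows to be nested: exactly one of the events $X^W_S \subsetneq X^{W'}_S$, $X^{W'}_S \subsetneq X^W_S$, $X^W_S = X^{W'}_S$ holds, and these three events partition the probability space. Assumption \ref{asmp:s-stays-active} also gives $\P[Y=1 \mid X^W_S \subsetneq X^{W'}_S,\, X^W_U = p,\, X^{W'}_U = q] = 1$ and the symmetric statement $=0$ on the second event. For the third event I would translate the pretraining conditionals back to the trajectory law: conditioned on $Y=1$, $(X^W,X^{W'})$ is distributed as a consecutive pair $(X^t,X^{t+1})$, while conditioned on $Y=-1$ it is distributed as $(X^{t+1},X^t)$; since $\P[Y=1]=\P[Y=-1]$ and Assumption \ref{asmp:time-reversible} makes these two laws agree on every event contained in $\{X^W_S = X^{W'}_S\}$, I obtain $\P[Y=1 \mid X^W_S = X^{W'}_S,\, X^W_U = p,\, X^{W'}_U = q] = \tfrac12$ whenever this conditioning event has positive probability. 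Finally I would note that if $p_{U\cap S}$ and $q_{U\cap S}$ are incomparable then $\P[X^W_U=p, X^{W'}_U=q]=0$, and if they are comparable but unequal then the event $\{X^W_S = X^{W'}_S\}$ is impossible under the conditioning, so both correction terms vanish there.

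Assembling these pieces, decomposing $\eta_U$ over the three $S$-events yields $\eta_U(p,q) = \P[X^W_S \subsetneq X^{W'}_S \mid p,q] + \tfrac12\,\P[X^W_S = X^{W'}_S \mid p,q]$ and, symmetrically, $1-\eta_U(p,q) = \P[X^{W'}_S \subsetneq X^W_S \mid p,q] + \tfrac12\,\P[X^W_S = X^{W'}_S \mid p,q]$, so $\min(\eta_U,1-\eta_U) = \tfrac12\,\P[X^W_S = X^{W'}_S \mid p,q] + m_U(p,q)$ when $p_{U\cap S} = q_{U\cap S}$ and is $0$ otherwise (matching the indicator in the statement, with the convention that a term of probability-zero conditioning is $0$). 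Taking $\E_{p,q}[\cdot]$, the first term marginalizes out $(p,q)$ to $\tfrac12\,\P[X^W_S = X^{W'}_S]$, which equals $\err(S)$ by Lemma \ref{lem:S-optimal}, and the second is exactly $\E_{p,q}[m_U(p,q)]$ as defined; this gives $\err(U) = \tfrac12\,\P[X^W_S = X^{W'}_S] + \E_{p,q}[m_U(p,q)] = \err(S) + \E_{p,q}[m_U(p,q)]$.

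The main obstacle I anticipate is the $\tfrac12$-identity on the event $\{X^W_S = X^{W'}_S\}$: it is the only place Assumption \ref{asmp:time-reversible} enters, and it requires carefully rewriting the OCP-conditional laws of $(X^W,X^{W'})$ in terms of the trajectory distribution $\P[X^t,X^{t+1}]$ so that the time-reversibility hypothesis applies, while being careful about the degenerate cases where the various conditional probabilities are not well-defined. By comparison, the nestedness dichotomy coming from Assumption \ref{asmp:s-stays-active} and the bookkeeping in the final summation are routine.
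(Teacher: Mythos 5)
Your proposal is correct and follows essentially the same route as the paper's proof: both condition on $(X^W_U, X^{W'}_U)=(p,q)$, split into the three nestedness events for the $S$-coordinates via Assumption \ref{asmp:s-stays-active}, use the time-reversibility argument from Lemma \ref{lem:S-optimal} to get the posterior value $\tfrac12$ on the equality event, and identify the residual Bayes loss on pairs with $p_{U\cap S}=q_{U\cap S}$ with $m_U(p,q)$. Your pointwise formulation via $\min(\eta_U,1-\eta_U)$ is just a slightly cleaner bookkeeping of the paper's term-by-term expansion, including the correct handling of the measure-zero incomparable case.
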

\begin{proof}
As with $S$, we know the optimal classifier using the features $U$ is given by:
\[
h_U(p,q) = \argmax_{y} \P[Y = y | X^W_U = p, X^{W'}_U = q].
\]

So we have:
\begin{align}
\label{eqn:big-error-expansion}
\err(U) &= \P[Y \ne h_U(p,q)] = \sum_{p,q} \P[Y \ne h_U(p,q) | X^W_U=p, X^{W'}_U=q]\P[X^W_U=p, X^{W'}_U=q]\nonumber\\
    & = \sum_{p,q} \P[Y \ne h_U(p,q), X^W_S = X^{W'}_S| X^W_U=p, X^{W'}_U=q]\P[X^W_U=p, X^{W'}_U=q]\nonumber\\
    & + \sum_{p,q} \P[Y \ne h_U(p,q), X^W_S \subsetneq X^{W'}_S| X^W_U=p, X^{W'}_U=q]\P[X^W_U=p, X^{W'}_U=q]\nonumber\\
    & + \sum_{p,q} \P[Y \ne h_U(p,q), X^{W'}_S \subsetneq X^{W}_S| X^W_U=p, X^{W'}_U=q]\P[X^W_U=p, X^{W'}_U=q],
\end{align}
where we've used Assumption \ref{asmp:s-stays-active} to narrow down to these three cases for the relationship between $X^W_S$ and $X^{W'}_S$.
The first term of \eqref{eqn:big-error-expansion} is the easiest to handle:
\begin{align*}
& \sum_{p,q} \P[Y \ne h_U(p,q), X^W_S = X^{W'}_S| X^W_U=p, X^{W'}_U=q]\P[X^W_U=p, X^{W'}_U=q]\\
&= \sum_{p,q} \P[Y \ne h_U(p,q) | X^W_S = X^{W'}_S]\P[X^W_S = X^{W'}_S | X^W_U=p, X^{W'}_U=q]\P[X^W_U=p, X^{W'}_U=q]\\
&= \frac{1}{2} \sum_{p,q}\P[X^W_S = X^{W'}_S | X^W_U=p, X^{W'}_U=q]\P[X^W_U=p, X^{W'}_U=q]\\
&= \frac{1}{2}\P[X^W_S = X^{W'}_S].
\end{align*}
In the first equality, we used that only the values of $X^W_S$ and $X^{W'}_S$ affect the posterior distribution of $Y$, as we saw in the proof of Lemma \ref{lem:S-optimal}.
The second equality used that $\P[Y \ne h_U(p,q) | X^W_S = X^{W'}_S] = \frac{1}{2}$ regardless of the value of $h_U(p,q)$, since we showed in the proof of Lemma \ref{lem:S-optimal} that $\P[Y=1 | X^W_S = X^{W'}_S] = \frac{1}{2}$.

Now we consider the second term of \eqref{eqn:big-error-expansion}:
\begin{align*}
    &\sum_{p,q} \P[Y \ne h_U(p,q), X^W_S \subsetneq X^{W'}_S| X^W_U=p, X^{W'}_U=q]\P[X^W_U=p, X^{W'}_U=q]\\
    &=\sum_{p,q} \P[Y \ne h_U(p,q) | X^W_S \subsetneq X^{W'}_S]\P[X^W_S \subsetneq X^{W'}_S | X^W_U=p, X^{W'}_U=q]\P[X^W_U=p, X^{W'}_U=q]\\
\end{align*}
When $p_{U\cap S} \subsetneq q_{U\cap S}$, it can be easily verified that $h_U(p,q) = 1$ and $Y=1$.
Likewise, when $q_{U\cap S} \subsetneq p_{U\cap S}$, $h_U(p,q) = Y = 0$.
So this term is equal to:
\[
\sum_{p,q\ :\ p_{U\cap S} = q_{U\cap S}} \P[Y \ne h_U(p,q) | X^W_S \subsetneq X^{W'}_S]\P[X^W_S \subsetneq X^{W'}_S | X^W_U=p, X^{W'}_U=q]\P[X^W_U=p, X^{W'}_U=q]
\]

Applying the same trick to the third term of \eqref{eqn:big-error-expansion} yields:
\begin{align*}
&\sum_{p,q} \P[Y \ne h_U(p,q), X^{W'}_S \subsetneq X^{W}_S| X^W_U=p, X^{W'}_U=q]\P[X^W_U=p, X^{W'}_U=q]\\
&= \sum_{p,q\ :\ p_{U\cap S} = q_{U\cap S}} \P[Y \ne h_U(p,q) | X^{W'}_S \subsetneq X^{W}_S]\P[X^{W'}_S \subsetneq X^{W}_S | X^W_U=p, X^{W'}_U=q]\P[X^W_U=p, X^{W'}_U=q]
\end{align*}
So we're left with:
\begin{align*}
&\sum_{p,q\ :\ p_{U\cap S} = q_{U\cap S}} \P[Y \ne h_U(p,q) | X^W_S \subsetneq X^{W'}_S]\P[X^W_S \subsetneq X^{W'}_S | X^W_U=p, X^{W'}_U=q]\P[X^W_U=p, X^{W'}_U=q] +\\
&\sum_{p,q\ :\ p_{U\cap S} = q_{U\cap S}} \P[Y \ne h_U(p,q) | X^{W'}_S \subsetneq X^{W}_S]\P[X^{W'}_S \subsetneq X^{W}_S | X^W_U=p, X^{W'}_U=q]\P[X^W_U=p, X^{W'}_U=q]
\end{align*}
Now consider a fixed $p,q$ with $p_{U\cap S} = q_{U\cap S}$.
If $X^W_S \subsetneq X^{W'}_S$, then $Y=1$, so we only pay if $h_U(p,q) = -1$.
Likewise, if $X^{W'}_S \subsetneq X^{W}_S$, $Y=-1$, so we only pay if $h_U(p,q) = 1$.
Since $h_U(p,q)$ is the Bayes-optimal classifier, it pays the minimal loss for each $p,q$.
So the above sum is equal to:
\[
\sum_{p,q\ :\ p_{U\cap S} = q_{U\cap S}}\min(\P[X^W_S \subsetneq X^{W'}_S | X^W_U=p, X^{W'}_U=q], \P[X^{W'}_S \subsetneq X^{W}_S | X^W_U=p, X^{W'}_U=q])\P[X^W_U=p, X^{W'}_U=q],
\]
which is precisely $\E_{p,q}[m_U(p,q)]$.
\end{proof}
\begin{lemma}
\label{lem:m-positive}
For any $U \subset [d]$ with $S\not\subset U$, $\E_{p,q}[m_U(p,q)] > 0$.
\end{lemma}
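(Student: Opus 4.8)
The plan is to exhibit a single pair $(p,q)$ that contributes a strictly positive term to the sum defining $\E_{p,q}[m_U(p,q)]$. Since every term of that sum is nonnegative, it suffices to find $p,q\in\{0,1\}^{|U|}$ with $p_{U\cap S}=q_{U\cap S}$, $\P[X^W_U=p,X^{W'}_U=q]>0$, and $m_U(p,q)>0$. Unwinding the definition, $m_U(p,q)>0$ exactly when both $\P[X^W_S\subsetneq X^{W'}_S\mid X^W_U=p,X^{W'}_U=q]>0$ and $\P[X^{W'}_S\subsetneq X^{W}_S\mid X^W_U=p,X^{W'}_U=q]>0$; once the conditioning event has positive probability, these are equivalent to the two joint probabilities $\P[X^W_S\subsetneq X^{W'}_S,X^W_U=p,X^{W'}_U=q]$ and $\P[X^{W'}_S\subsetneq X^{W}_S,X^W_U=p,X^{W'}_U=q]$ being strictly positive.

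Because $S\not\subset U$, Assumption \ref{asmp:U-asmp} applies and supplies a time index $t$ and vectors $v,v'\in\{0,1\}^d$ with $v_{U\cap S}=v'_{U\cap S}$ such that both $\P[X^t_S\subsetneq X^{t+1}_S,X^t_U=v_U,X^{t+1}_U=v'_U]>0$ and $\P[X^t_S\subsetneq X^{t+1}_S,X^t_U=v'_U,X^{t+1}_U=v_U]>0$. I would take $p=v_U$ and $q=v'_U$, so that $p_{U\cap S}=q_{U\cap S}$ holds automatically, and it remains only to transfer these two time-level inequalities to the window-level quantities using the OCP window-sampling process.

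Conditioning on the contrastive label $Y$ and the base index $T$ (uniform over the valid starting positions), the distribution of $(X^W,X^{W'})$ is a mixture: with probability $\tfrac12\P[T=t]$ it equals $(X^t,X^{t+1})$ on the $Y=1$ branch, and with probability $\tfrac12\P[T=t]$ it equals $(X^{t+1},X^t)$ on the $Y=-1$ branch. Keeping only these two mixture components, the $Y=1,T=t$ branch gives $\P[X^W_S\subsetneq X^{W'}_S,X^W_U=p,X^{W'}_U=q]\ge \tfrac12\P[T=t]\,\P[X^t_S\subsetneq X^{t+1}_S,X^t_U=v_U,X^{t+1}_U=v'_U]>0$, while the $Y=-1,T=t$ branch gives $\P[X^{W'}_S\subsetneq X^{W}_S,X^W_U=p,X^{W'}_U=q]\ge \tfrac12\P[T=t]\,\P[X^t_S\subsetneq X^{t+1}_S,X^t_U=v'_U,X^{t+1}_U=v_U]>0$, where in the second line the reversal built into the negative sampling swaps the roles of $v_U$ and $v'_U$ and flips the containment direction, matching the \emph{second} term of the minimum in Assumption \ref{asmp:U-asmp}. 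Both displayed quantities being positive forces $\P[X^W_U=p,X^{W'}_U=q]>0$ (it dominates either one) and hence $m_U(p,q)>0$, so the $(p,q)$ term in $\E_{p,q}[m_U(p,q)]$ is strictly positive, which proves the lemma.

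The only delicate point is the bookkeeping in the last step: lining up the window-level events with the time-level events through the OCP process, in particular verifying that the reversed ($Y=-1$) branch corresponds to the second term in the minimum of Assumption \ref{asmp:U-asmp} — which is precisely why that assumption is stated with both orderings. Beyond getting that alignment right, there is no genuine obstacle.
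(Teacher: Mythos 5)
Your proposal is correct and follows essentially the same route as the paper's proof: fix the pair $p=v_U$, $q=v'_U$ supplied by Assumption \ref{asmp:U-asmp}, and use the OCP sampling process to transfer the two time-level positivity conditions to the window-level events, with the reversed ($Y=-1$) branch lining up with the second term of the minimum. The only difference is presentational --- you bound the joint probabilities directly via the mixture over $(Y,T)$, whereas the paper phrases the same step through conditional probabilities and a Bayes-rule computation for $\P[W=t,W'=t+1\mid X^W_U=p,X^{W'}_U=q]$.
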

\begin{proof}
Fix $U \subset [d]$ with $S \not \subset U$. 
Assumption \ref{asmp:U-asmp} states that there exists $t$ and $p,q$ with $p_{U\cap S} = q_{U\cap S}$ satisfying:
\begin{align}
&\P[X^t_S \subsetneq X^{t+1}_S, X^t_U = p, X^{t+1}_U = q] > 0\label{eqn:a3-first-part}\\
&\P[X^t_S \subsetneq X^{t+1}_S, X^t_U = q, X^{t+1}_U = p] > 0\label{eqn:a3-second-part}
\end{align}
Fix $t, p, q$ from this assumption. Observe that:
\begin{align}
\label{eqn:apdx-S-unique-display}
\P[X_S^W \subsetneq X_S^{W'} | &X^W_U = p, X^{W'}_U = q] \ge \P[X_S^W \subsetneq X_S^{W'}, W=t, W'=t+1| X^W_U = p, X^{W'}_U = q]\nonumber\\
&= \P[X_S^W \subsetneq X_S^{W'} | W=t, W'=t+1 X^W_U = p, X^{W'}_U = q]\P[W=t, W'=t+1| X^W_U = p, X^{W'}_U = q]\nonumber\\
&= \P[X_S^t \subsetneq X_S^{t+1} | X^t_U = p, X^{t+1}_U = q]\P[W=t,W'=t+1 | X^W_U = p, X^{W'}_U = q]
\end{align}
The first term is strictly positive because \eqref{eqn:a3-first-part} guaranteed $\P[X^t_S \subsetneq X^{t+1}_S, X^t_U = p, X^{t+1}_U = q] > 0$.
\begin{align*}
\P[W=t,W'=t+1|X^W_U=p,X^{W'}_U=q] &= \frac{\P[X^W_U=p,X^{W'}_U=q| W=t,W'=t+1]\P[W=t,W'=t+1]}{\P[X^W_U=p,X^{W'}_U=q]}\\
&= \frac{\P[X^t_U=p,X^{t+1}_U=q]\P[W=t, W'=t+1]}{\P[X^W_U=p,X^{W'}_U=q]}\\
&= \frac{\P[X^t_U=p,X^{t+1}_U=q]\P[W=t, W'=t+1]}{\sum_{w,w'}\P[X^w_U=p,X^{w'}_U=q | W=w, W'=w']\P[W=w,W'=w']}
\end{align*}
Equation \eqref{eqn:a3-first-part} guarantees the first term of the numerator is positive.
The second term of the numerator is positive by the definition of OCP sampling.
The denominator is positive since the numerator is a term in the denominator.
Hence, $\P[W=t,W'=t+1|X^W_U=p,X^{W'}_U=q] > 0$, so \eqref{eqn:apdx-S-unique-display} implies $\P[X_S^W \subsetneq X_S^{W'} | X^W_U = p, X^{W'}_U = q] > 0$.
A symmetric analysis setting $W' = t$ and $W = t+1$ and appealing to \eqref{eqn:a3-second-part} implies $\P[X_S^{W'} \subsetneq X_S^{W} | X^W_U = p, X^{W'}_U = q] > 0$.

We've shown that for any $U$ with $S\not\subset U$ there exists $p$, $q$ with $p_{U\cap S} = q_{U\cap S}$, $m_U(p,q) > 0$, and $\P[X^W_U = p, X^{W'}_U = q] > 0$. This implies that $\E_{p,q}[m(p,q)] > 0$.
\end{proof}

Combining the previous two lemmas immediately implies that $S$ is the unique optimal set of features for OCP (of size $|S|$).
\begin{corollary}
$S$ is the \emph{unique} optimal set of features of size $|S|$.
\end{corollary}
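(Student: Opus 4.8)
The plan is to combine Lemma~\ref{lemma:S-unique} and Lemma~\ref{lem:m-positive}, after one elementary observation about cardinalities. First I would fix an arbitrary feature selector $U \in \cG$, i.e., $U \subset [d]$ with $|U| = |S| = d_0$, and suppose $U \neq S$. Since $U$ and $S$ are distinct sets of the same finite cardinality, $S$ cannot be a proper subset of $U$, and it is not equal to $U$ by assumption; hence $S \not\subset U$. This is precisely the hypothesis under which Lemma~\ref{lem:m-positive} applies.

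Next I would invoke Lemma~\ref{lemma:S-unique} to write the best achievable OCP loss using the features $U$ as
\[
\err(U) = \err(S) + \E_{p,q}[m_U(p,q)],
\]
and then apply Lemma~\ref{lem:m-positive} to conclude $\E_{p,q}[m_U(p,q)] > 0$, so that $\err(U) > \err(S)$ strictly. Combined with the earlier corollary showing that the feature selector $S$ already attains the Bayes-optimal OCP error (so $\err(S)$ is the overall minimum over representations), this establishes that $S$ is the unique minimizer of $\err(\cdot)$ over $\cG$, which is the claim.

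I do not expect a genuine obstacle here: all the substantive work lives in the two preceding lemmas. The one point worth stating carefully is the cardinality argument that upgrades ``$U \neq S$'' to ``$S \not\subset U$'' so that Lemma~\ref{lem:m-positive} is applicable. Note that without the size restriction built into the definition of $\cG$ this step would fail, and indeed any strict superset $U \supsetneq S$ has $\E_{p,q}[m_U(p,q)] = 0$ and hence also attains the optimum (conditioning on the $U$-coordinates pins down $X^W_S$ and $X^{W'}_S$, so the events in the $\min$ defining $m_U$ have probability zero); such $U$ just carries redundant coordinates. The restriction $|U| = d_0$ is exactly what rules these out and makes the optimal feature selector unique.
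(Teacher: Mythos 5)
Your proof is correct and follows essentially the same route as the paper: the corollary is obtained by combining Lemma~\ref{lemma:S-unique} with Lemma~\ref{lem:m-positive}, so that $\err(U) = \err(S) + \E_{p,q}[m_U(p,q)] > \err(S)$ for any $U \in \cG$ with $U \neq S$. Your explicit cardinality remark (that $|U| = |S|$ and $U \neq S$ force $S \not\subset U$, and that supersets of $S$ would otherwise tie the optimum) is a correct and worthwhile elaboration of a step the paper leaves implicit.
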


\paragraph{Notation.} 
Recall that the OCP pretraining objective $R_{ord}$ is defined over pairs $(c,g)$ of an order classifier $c$ with a representation $g$. For $h = (c,g)$, we equate $R_{ord}(h) = R_{ord}(g,c)$.
For a fixed $g$, we let $R_{ord}(g)$ be the ordering loss of the best classifier $c$ on top of $g$:
$R_{ord}(g) = \inf_{c\in \cC}R_{ord}(c,g)$.
Similarly, the downstream loss $R(f,g)$ is a function of a downstream classifier $f$ on top of a representation $g$.
We refer to the empirical versions of $R_{ord}$ and $R$ as $\hat{R}_{ord}$ and $\hat{R}$, respectively.
Recall from the proof of Lemma \ref{lem:S-optimal} that in our case, we can take $\cC$ to consist of one (linear) function without loss of generality.

Finally, we can prove Theorem \ref{thm:finite-sample}.
\begin{theorem*}[Theorem \ref{thm:finite-sample} (formal)]
Let $\epsilon_0 = \min_{U : S\not \subset U} \err(U) - \err(S)$.
Suppose we have an unlabeled dataset of $m$ i.i.d. pretraining points $\{(Z_i, Y_i)\}_{i=1}^m$, with:
\[
m > \frac{2\left(\log {d \choose |S|} + \log \frac{4}{\delta}\right)}{\epsilon_0^2},
\]
and a labeled dataset of $n$ downstream training points $\{(X^t_i, Y^t_i)\}_{i=1}^n$.
Let $\cG$ be all sets of size $|S|$ features chosen from the full set of $d$ features.
Let $\hat{g}$ be the minimizer of the empirical OCP pretraining objective:
\[
\hat{g} = \argmin_{g \in \cG} \hat{R}_{ord}(g)
\]
Let $\hat{f}$ be the minimizer of the empirical downstream objective when using the fixed representation $\hat{g}$:
\[
\hat{f} = \argmin_{f \in \cF} \hat{R}(f, \hat{g})
\]
Then for any $\delta > 0$, with probability at least $1-\delta$, $(\hat{f}, \hat{g})$ has excess risk:
\[
R(\hat{f}, \hat{g}) - \inf_{(f,g) \in \cF \times \cG} R(f,g) \le O\left(\sqrt{\frac{\VC(\cF) + \log(\frac{2}{\delta})}{n}}\right)
\]
\end{theorem*}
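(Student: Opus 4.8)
The plan is a two-phase argument mirroring the two-phase learner $\cA_{pt}$. In the first phase I would show that, with $m$ as large as stated, the pretraining ERM recovers \emph{exactly} $\hat g = S$ with probability at least $1-\delta/2$ over the unlabeled sample. In the second phase, conditioning on $\hat g = S$, the downstream learner is just agnostic ERM over $\cF$ with a \emph{fixed} feature-selection map, so a standard VC bound gives the $\sqrt{\VC(\cF)/n}$ rate; a union bound over the two events then finishes the proof. The key quantitative input is the margin $\epsilon_0 = \min_{U:\,S\not\subset U}\err(U)-\err(S)$, which is strictly positive by Lemma~\ref{lem:m-positive} together with Lemma~\ref{lemma:S-unique}; note that for feature selectors of size $|S|$, every $U\neq S$ in $\cG$ has $S\not\subset U$, so $\err(U)\ge \err(S)+\epsilon_0$, while $R_{ord}(S)=\err(S)$ by Lemma~\ref{lem:S-optimal}.

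For phase one, recall from the proof of Lemma~\ref{lem:S-optimal} that we may take $\cC=\{h\}$ to be the single sign-classifier, so for every $g\in\cG$ the quantity $\hat R_{ord}(g)=\hat R_{ord}(h,g)$ is an empirical average of $m$ i.i.d.\ Bernoulli losses with mean $R_{ord}(g)$, and in particular $R_{ord}(U)=R_{ord}(h,U)\ge\err(U)$ since $h$ composed with the projection onto $U$ is one admissible classifier. Hoeffding's inequality with deviation $t=\epsilon_0/2$, plus a union bound over the $\binom{d}{|S|}$ elements of $\cG$, gives that with probability at least $1-2\binom{d}{|S|}\exp(-m\epsilon_0^2/2)$ we have $|\hat R_{ord}(g)-R_{ord}(g)|<\epsilon_0/2$ for all $g\in\cG$ simultaneously; the stated lower bound on $m$ is exactly what makes this failure probability at most $\delta/2$. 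On this event $\hat R_{ord}(S)<\err(S)+\epsilon_0/2$ while $\hat R_{ord}(U)>R_{ord}(U)-\epsilon_0/2\ge\err(U)-\epsilon_0/2\ge\err(S)+\epsilon_0/2$ for every $U\ne S$ in $\cG$, so $S$ is the strict minimizer of $\hat R_{ord}$ and hence $\hat g=S$.

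For phase two, since the labeled and unlabeled samples are independent I would condition on the realization $\hat g=S$. The downstream learner then outputs $\hat f=\argmin_{f\in\cF}\hat R(f,S)$ from $n$ i.i.d.\ labeled points $(X^T_S,Y^T)$, so the effective hypothesis class is $\{f\circ S:f\in\cF\}$, whose VC dimension is at most $\VC(\cF)$. The standard agnostic generalization bound (e.g.\ \citet[Theorem~6.8]{shalev2014understanding}) then yields, with probability at least $1-\delta/2$ over the labeled sample, $R(\hat f,S)-\inf_{f\in\cF}R(f,S)\le O\big(\sqrt{(\VC(\cF)+\log(2/\delta))/n}\big)$. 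By the theorem's hypothesis $S=\argmin_{g\in\cG}\min_{f\in\cF}R(f,g)$, so $\inf_{f\in\cF}R(f,S)=\inf_{(f,g)\in\cF\times\cG}R(f,g)$; combining this with $\hat g=S$, the left-hand side equals the excess risk $R(\hat f,\hat g)-\inf_{(f,g)}R(f,g)$. A final union bound over the two $\delta/2$-failure events gives the claim.

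The step requiring the most care is phase one: one must use the reduction to a single order-classifier (so that no uniform convergence over $\cC$ is needed and the unlabeled-sample dependence is only $\log\binom{d}{|S|}$), invoke the strict positivity of the margin $\epsilon_0$ that comes from Assumption~\ref{asmp:U-asmp} via Lemma~\ref{lem:m-positive}, and be careful that ``recovers $S$'' must mean \emph{exact} equality of feature sets, since that is precisely what allows the downstream bound to depend only on $\VC(\cF)$ rather than on $\VC(\cF\times\cG)$. The downstream step is then entirely routine.
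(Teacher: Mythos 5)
Your proposal is correct and takes essentially the same route as the paper: the paper also argues that $\epsilon_0>0$ (via Lemmas \ref{lemma:S-unique} and \ref{lem:m-positive}), uses the single order classifier from Lemma \ref{lem:S-optimal} so that uniform convergence is over the finite class of $\binom{d}{|S|}$ feature selectors (it cites a finite-class bound of Mohri et al., which is exactly your Hoeffding-plus-union-bound with deviation $\epsilon_0/2$), concludes $\hat g = S$ with probability $1-\delta/2$, and then applies the standard VC bound for $\cF$ with a union bound over the two failure events. Your write-up is, if anything, slightly more explicit than the paper about why $\hat R_{ord}(U)>\hat R_{ord}(S)$ for every $U\neq S$ on the good event.
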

\begin{proof}
First, observe that Lemma \ref{lemma:S-unique} implies that $\epsilon_0 > 0$.
We identify the functions $g \in \cG$ with the sets that they select.
\citet[Theorem 2.13, p.20,][]{mohri2018foundations} implies that the amount of unlabeled data is sufficient so that
$R_{ord}(\hat{g}) < R_{ord}(S) + \epsilon_0$ with probability at least $1-\delta/2$.
By Lemma \ref{lemma:S-unique}, this implies $\hat{g} = S$.
Then we can learn downstream without incurring any cost compared to an optimal pair $(f^*, g^*)$, since $\hat{g} = g^* = S$.
Hence, the standard uniform convergence argument for $\VC$-classes (e.g., \citet[][Theorem 6.8]{shalev2014understanding}) implies the excess risk bound for $(\hat{f}, \hat{g})$ holds with probability at least $1-\delta/2$.
Combining the probabilities of failure for each step gives total failure probability at most $\delta$.

\end{proof}

\subsection{PCL versus OCP}
Consider a feature $X^t_i \in \{0,1\}$ such that $\P[X_i^1 = 1] = 1/2$, and $X^{t+1}_i = 1-X^t_i$.
This is a periodic feature with period 1.
$X_i^t$ qualifies as a background feature, since it is independent of all other features $X_j$, and $P[X_i^t = v, X_i^{t+1} = v'] = P[X_i^t = v', X_i^{t+1} = v]$ for all $(v,v') \in \{0,1\}^2$.
By the results from the previous section, inclusion of this feature therefore does not affect whether OCP finds the correct representation $S$.
In OCP with consecutive windows, $X^W_i \ne X^{W'}_i$ with probability 1 regardless of $Y$, so that feature is not useful for distinguishing between correctly and incorrectly-ordered pairs.
However, this feature is very helpful for PCL.
Because the positives for PCL are always consecutive, but the negatives are pairs of random elements, $\P[Y=-1 | X^W_i = X^{W'}_i] = 1$.
This biases PCL towards inclusion of feature $i$ over inclusion of more rarely-occurring features in $S$, so there are simple examples where PCL provably never recovers $S$ even in the infinite-sample limit.
Additionally, even when PCL works well in the infinite data limit, features that are \emph{weakly} predictive of whether elements are consecutive or not can affect the unlabeled sample complexity of PCL, as we show empirically in Appendix \ref{apdx:synth-experiments}.

\subsection{Biased OCP}
The OCP algorithm samples windows $(W,W+1)$ when $Y=1$ and $(W+1, W)$ when $Y=-1$.
Consider instead a version of OCP with ``bias'' in the negative distribution.
When $Y=1$, we still sample $(W,W+1)$ as the window pair.
But when $Y=-1$, we sample a \emph{random} pair $(W,W')$ with $|W-W'|=1$.
As with PCL, half of the negatives for this ``OCP-biased'' sampling are actually in the correct order.
Does this affect the representation learned during pretraining, and does it affect the unlabeled sample complexity required for learning?

Now we give two lemmas analogous to lemmas \ref{lem:S-optimal} and \ref{lemma:S-unique}.
\begin{lemma}
\label{lem:biased-S-optimal}
Let \[
h^*(v,v') = \argmax_{y\in\{-1,1\}} \P[Y=y | X^W = v, X^{W'} = v']
\]
be the Bayes-optimal classifier for the OCP-biased objective.
There exists a classifier $h(v_S,v'_S)$, depending only on the $S$ coordinates of $v$ and $v'$, achieving the same OCP-biased error as $h^*$.
\end{lemma}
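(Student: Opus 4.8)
The plan is to mirror the proof of Lemma~\ref{lem:S-optimal}: compute the posterior $\P[Y = 1 \mid X^W = v, X^{W'} = v']$ under the OCP-biased sampling distribution, observe that the Bayes-optimal decision depends only on the $S$-coordinates of $v$ and $v'$, and then exhibit the explicit classifier of \eqref{eqn:h-defn} that realizes it. The one genuinely new ingredient is bookkeeping of the negative distribution: under OCP-biased, conditioned on $Y = -1$ we draw a uniformly random consecutive index and present the pair in a uniformly random order, so if
\[
\mu(v,v') := \frac{1}{\tau-1}\sum_{t=1}^{\tau-1}\P[X^t = v, X^{t+1} = v']
\]
denotes the correctly-ordered consecutive-pair distribution (which is exactly the conditional law of the pair given $Y = 1$), then the conditional law of the pair given $Y = -1$ is the mixture $\tfrac12\mu(v,v') + \tfrac12\mu(v',v)$. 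Since $\P[Y=1]=\P[Y=-1]=\tfrac12$, Bayes' rule gives, for every pair $(v,v')$ of positive probability,
\[
\P[Y = 1 \mid X^W = v, X^{W'} = v'] = \frac{\mu(v,v')}{\tfrac32\mu(v,v') + \tfrac12\mu(v',v)}.
\]

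Next I would split into cases using Assumption~\ref{asmp:s-stays-active}, which forces $v_S \subseteq v'_S$ whenever $\mu(v,v') > 0$ (a reversed consecutive pair would require some $S$-feature to switch from $1$ to $0$). If $v_S \subsetneq v'_S$ then $\mu(v',v) = 0$ and the posterior equals $\tfrac23$; if $v'_S \subsetneq v_S$ then $\mu(v,v') = 0$ and the posterior equals $0$; and if $v_S = v'_S$ then Assumption~\ref{asmp:time-reversible} (after summing over $t$) gives $\mu(v,v') = \mu(v',v)$, so the posterior equals $\tfrac12$. In all three cases the Bayes-optimal label is determined by $v_S$ and $v'_S$ alone --- predict $+1$ iff $v_S \subsetneq v'_S$, breaking ties towards $-1$ --- so the classifier $h(v_S,v'_S)$ defined in \eqref{eqn:h-defn} attains the Bayes error. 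As in Lemma~\ref{lem:S-optimal}, $h$ is a linear threshold over the $2|S|$ coordinates $(v_S, v'_S)$, so $\cC$ may again be taken to be the class of linear functions, and $\VC(\cC)$ is bounded.

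The only real obstacle I anticipate is the careful handling of the two-fold presentation order in the negative distribution together with zero-probability pairs: one must combine the two ``ways'' a negative pair can be generated correctly and ensure that conditioning on $\{X^W = v, X^{W'} = v'\}$ is only ever performed on events of positive probability. Once the posterior formula above is pinned down, the remaining case analysis is essentially word-for-word the argument in Lemma~\ref{lem:S-optimal}, with the posterior value $1$ replaced by $\tfrac23$; this change does not affect the decision rule, and it will only matter later when quantifying the margin between $\err(S)$ and $\err(U)$ for $S \not\subset U$, hence the unlabeled sample complexity bound for the biased objective.
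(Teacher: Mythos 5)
Your proposal is correct and follows exactly the route the paper intends: the paper only sketches this proof as ``entirely analogous to Lemma \ref{lem:S-optimal} with extra handling of the case $X^W_S \subsetneq X^{W'}_S$,'' and your mixture bookkeeping for the negatives, yielding posterior $\tfrac23$ instead of $1$ on strictly-increasing pairs (and $\tfrac12$ on $S$-equal pairs via Assumption \ref{asmp:time-reversible}), is precisely that extra handling. Your case analysis and the resulting error are also consistent with the paper's Lemma \ref{lemma:biased-S-unique}, so nothing is missing.
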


\begin{lemma}
\label{lemma:biased-S-unique}
For $U \subset [d]$, define the error of $U$ as the best OCP-biased loss achievable when using the features in the set $U$:
\[
\err(U) = \inf_{c \text{ measurable}} \P[Y \ne c(X^W_U, X^{W'}_U)].
\]
For any $p, q \in \{0,1\}^{|U|}$ with $p_{U\cap S} = q_{U\cap S}$, define:
\[
m'_U(p,q) = \min\left(\frac{1}{3}\P[X^W_S \subsetneq X^{W'}_S | X^W_U = p, X^{W'}_U=q]; \P[X^{W'}_S \subsetneq X^{W}_S | X^W_U = p, X^{W'}_U=q]\right).
\]
The expected value of $m'_U(p,q)$ is given by:
\[
\E_{p,q}[m'_U(p,q)] = \sum_{p,q \in \{0,1\}^{|U|}}\mathbb{I}[p_{U\cap S} = q_{U\cap S}]\P[X^W_U=p, X^{W'}_U = q]m'_U(p,q)
\]
Then for any $U$,
\begin{align*}
\err(U) &= \frac{1}{2}\P[X^W_S = X^{W'}_S] + \frac{1}{3}\P[X^W_S \subsetneq X^{W'}_S] + \E_{p,q}[m'_U(p,q)]\\
&= \err(S) + \E_{p,q}[m'_U(p,q)]
\end{align*}
\end{lemma}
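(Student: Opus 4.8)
The plan is to mirror the proof of Lemma~\ref{lemma:S-unique}, with the only substantive change being the Bayes posterior that the OCP-biased sampling induces: under OCP-biased a window pair is seen in the correct order (meaning $(T,T+1)$) with probability $3/4$ and in the incorrect order with probability $1/4$. First I would compute the posterior directly. Writing $p^+(a,b)=\tfrac{1}{\tau-1}\sum_t \P[X^t=a,X^{t+1}=b]$ for the probability a uniformly random consecutive pair in correct order equals $(a,b)$, the generative process gives $\P[X^W=v,X^{W'}=v'\mid Y=1]=p^+(v,v')$ and $\P[X^W=v,X^{W'}=v'\mid Y=-1]=\tfrac12\big(p^+(v,v')+p^+(v',v)\big)$, so that
\[
\P[Y=1\mid X^W=v,X^{W'}=v']=\frac{2p^+(v,v')}{3p^+(v,v')+p^+(v',v)}.
\]
Now use Assumption~\ref{asmp:s-stays-active} exactly as in Lemma~\ref{lem:S-optimal}: on positive-probability pairs $v_S$ and $v'_S$ are comparable, and when $v_S\subsetneq v'_S$ we have $p^+(v',v)=0$ so the posterior is $2/3$; when $v'_S\subsetneq v_S$ we have $p^+(v,v')=0$ so the posterior is $0$; and when $v_S=v'_S$, Assumption~\ref{asmp:time-reversible} gives $p^+(v,v')=p^+(v',v)$ so the posterior is $1/2$. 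Thus the posterior depends only on $(v_S,v'_S)$, which (as in Lemma~\ref{lem:S-optimal}) already yields Lemma~\ref{lem:biased-S-optimal}.

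Next I would expand $\err(U)=\sum_{p,q}\P[Y\neq h_U(p,q)\mid X^W_U=p,X^{W'}_U=q]\P[X^W_U=p,X^{W'}_U=q]$, splitting the inner event, via Assumption~\ref{asmp:s-stays-active}, into the three exhaustive cases $X^W_S=X^{W'}_S$, $X^W_S\subsetneq X^{W'}_S$, $X^{W'}_S\subsetneq X^W_S$, and splitting the $(p,q)$-sum by whether $p_{U\cap S}\subsetneq q_{U\cap S}$, $q_{U\cap S}\subsetneq p_{U\cap S}$, or $p_{U\cap S}=q_{U\cap S}$. Monotonicity of the $S$-features forces: if $p_{U\cap S}\subsetneq q_{U\cap S}$ then all mass lies in the event $X^W_S\subsetneq X^{W'}_S$, so the conditional posterior of $Y=1$ is $2/3$, $h_U(p,q)=1$, and the conditional error is $1/3$; if $q_{U\cap S}\subsetneq p_{U\cap S}$ then all mass lies in $X^{W'}_S\subsetneq X^W_S$, the posterior of $Y=-1$ is $1$, and the error is $0$. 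In the remaining case $p_{U\cap S}=q_{U\cap S}$, let $a,b,c$ denote the conditional probabilities of the three $S$-cases given $(X^W_U,X^{W'}_U)=(p,q)$; then $\P[Y=1\mid p,q]=\tfrac a2+\tfrac{2b}{3}$ and $\P[Y=-1\mid p,q]=\tfrac a2+\tfrac b3+c$, so the Bayes error there is $\min$ of the two, which rewrites as $\tfrac a2+\tfrac b3+\min(\tfrac b3,c)=\tfrac a2+\tfrac b3+m'_U(p,q)$.

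Finally I would re-sum over $(p,q)$, using that the event $X^W_S=X^{W'}_S$ forces $p_{U\cap S}=q_{U\cap S}$ (so the $\tfrac a2$ terms aggregate to $\tfrac12\P[X^W_S=X^{W'}_S]$) and that $X^W_S\subsetneq X^{W'}_S$ forces $p_{U\cap S}\subseteq q_{U\cap S}$ (so the $\tfrac13$ terms from the $p_{U\cap S}\subsetneq q_{U\cap S}$ case together with the $\tfrac b3$ terms from the $p_{U\cap S}=q_{U\cap S}$ case aggregate to $\tfrac13\P[X^W_S\subsetneq X^{W'}_S]$); the $q_{U\cap S}\subsetneq p_{U\cap S}$ case contributes $0$, and the leftover terms are exactly $\E_{p,q}[m'_U(p,q)]$. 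This gives the claimed formula for $\err(U)$, and specializing to $U=S$ (where $p_S=q_S$ forces $p=q$, hence $m'_S\equiv 0$ and $\E_{p,q}[m'_S]=0$) identifies the constant as $\err(S)$, yielding $\err(U)=\err(S)+\E_{p,q}[m'_U(p,q)]$.

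\textbf{Main obstacle.} The delicate step is the $p_{U\cap S}=q_{U\cap S}$ case: one must average the $Y$-posterior over the hidden coordinates $X_{S\setminus U}$ and notice the asymmetry that the ``$\subsetneq$'' branch contributes its mass weighted by $1/3$ to $\P[Y=-1\mid p,q]$ while the ``$\supsetneq$'' branch contributes its full mass, which is precisely what produces the lopsided $\min(\tfrac13 b,c)$ defining $m'_U$ (in contrast to the symmetric $\min$ in Lemma~\ref{lemma:S-unique}). Everything else—the case bookkeeping and the final re-summation—is routine once one records which $S$-relations are compatible with which relation between $p_{U\cap S}$ and $q_{U\cap S}$; and as with Lemma~\ref{lem:m-positive}, Assumption~\ref{asmp:U-asmp} will then give $\E_{p,q}[m'_U(p,q)]>0$ for $S\not\subset U$, though that is not needed for the present statement.
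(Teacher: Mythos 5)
Your proposal is correct and follows exactly the route the paper intends: the paper only remarks that the proof is ``entirely analogous'' to Lemmas \ref{lem:S-optimal} and \ref{lemma:S-unique} with extra handling of the $X^W_S \subsetneq X^{W'}_S$ case, and your calculation supplies precisely those details (the posterior $\tfrac{2p^+(v,v')}{3p^+(v,v')+p^+(v',v)}$, hence $2/3$, $0$, $1/2$ on the three $S$-cases, and the resulting lopsided $\min(\tfrac b3,c)=m'_U(p,q)$). The case bookkeeping and re-summation, including $m'_S\equiv 0$ so the constant equals $\err(S)$, all check out.
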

The proofs of these lemmas are entirely analogous to the proofs of lemmas \ref{lem:S-optimal}, \ref{lemma:S-unique} with some extra handling of the case of the $X^W_S \subsetneq X^{W'}_S$, since in OCP-biased those samples could also be negatives.

Now we can prove Theorem 2:
\begin{theorem*}[Theorem 2 (formal)]
Suppose Assumptions \ref{asmp:s-stays-active}-\ref{asmp:U-asmp} are satisfied.
Then $S$ is the unique optimal representation for the OCP-biased objective of size $|S|$.
\end{theorem*}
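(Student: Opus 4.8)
The plan is to reuse, essentially verbatim, the three-step argument that established uniqueness of $S$ for ordinary OCP (Lemmas \ref{lem:S-optimal}, \ref{lemma:S-unique}, \ref{lem:m-positive}), now feeding it with the OCP-biased analogues Lemmas \ref{lem:biased-S-optimal} and \ref{lemma:biased-S-unique}. Lemma \ref{lem:biased-S-optimal} already shows that the feature selector $g = S$ attains the best achievable OCP-biased loss, so it remains only to show $S$ is the \emph{unique} size-$|S|$ feature set doing so. Since $|U| = |S|$, the hypothesis $S \not\subset U$ is equivalent to $U \neq S$, so I must show $\err(U) > \err(S)$ for every such $U$. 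By Lemma \ref{lemma:biased-S-unique}, $\err(U) - \err(S) = \E_{p,q}[m'_U(p,q)] \ge 0$, so the whole theorem reduces to an OCP-biased version of Lemma \ref{lem:m-positive}: for every $U$ with $S \not\subset U$, there is a pair $(p,q)$ with $p_{U\cap S} = q_{U\cap S}$, positive sampling weight $\P[X^W_U = p, X^{W'}_U = q] > 0$, and $m'_U(p,q) > 0$.

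The crux is proving that strict positivity, which I would do by rerunning the argument of Lemma \ref{lem:m-positive} with only cosmetic changes. First, the $\tfrac13$ factor attached to one branch of the $\min$ in the definition of $m'_U$ is irrelevant: $m'_U(p,q) > 0$ holds exactly when \emph{both} conditional probabilities $\P[X^W_S \subsetneq X^{W'}_S \mid X^W_U = p, X^{W'}_U = q]$ and $\P[X^{W'}_S \subsetneq X^W_S \mid X^W_U = p, X^{W'}_U = q]$ are strictly positive, which is precisely the positivity condition already dealt with in Lemma \ref{lem:m-positive}. So I would fix $U$ with $S \not\subset U$, pick $i \in S \setminus U$, invoke Assumption \ref{asmp:U-asmp} to obtain $t$ and $v, v'$ with $v_{U\cap S} = v'_{U\cap S}$ and both $\P[X^t_S \subsetneq X^{t+1}_S, X^t_U = v_U, X^{t+1}_U = v'_U] > 0$ and the same quantity with $v_U$ and $v'_U$ swapped, and then lower-bound the conditional probabilities of interest by restricting to the window-index events $\{W=t, W'=t+1\}$ and $\{W=t+1, W'=t\}$, exactly as in the proof of Lemma \ref{lem:m-positive}. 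The only point needing fresh verification is that these window-index events still have positive probability \emph{under the OCP-biased sampling distribution}: $(W,W') = (t, t+1)$ occurs with positive probability both from the positives ($Y=1$) and from the negatives ($Y=-1$), while $(W,W') = (t+1, t)$ occurs with positive probability from the negatives, since when $Y=-1$ OCP-biased draws a uniformly random consecutive pair and hence assigns positive mass to \emph{both} orderings of $(t, t+1)$. This is if anything easier to check than in the OCP case. Combining, $m'_U(v_U, v'_U) > 0$ with positive sampling weight, so $\E_{p,q}[m'_U(p,q)] > 0$, hence $\err(U) > \err(S)$, and $S$ is the unique size-$|S|$ optimum.

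The main obstacle is not conceptual but bookkeeping: carefully checking the Bayes-optimal-classifier computation underlying Lemma \ref{lemma:biased-S-unique}, where the extra $\tfrac13\P[X^W_S \subsetneq X^{W'}_S]$ term enters because, conditioned on $X^W_S \subsetneq X^{W'}_S$, an OCP-biased sample may still have been drawn as a negative whose two windows happened to land in increasing order, so the label is uniform rather than deterministic in part of that event. Once that decomposition is in hand (it is given here as Lemma \ref{lemma:biased-S-unique}), the remaining work is just the window-sampling positivity check for the modified negative distribution, and the conceptual content — that biasing the negatives does not move the population optimum — is fully captured by the elementary observation that multiplying one branch of a $\min$ by a positive constant cannot change whether it is positive.
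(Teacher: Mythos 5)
Your proposal is correct and follows essentially the same route as the paper: reduce via Lemma \ref{lemma:biased-S-unique} to showing $\E_{p,q}[m'_U(p,q)]>0$ for every $U$ with $S\not\subset U$, and obtain that positivity from Assumption \ref{asmp:U-asmp}. The paper's proof is a one-liner---since the $\tfrac13$ factor cannot affect positivity, Lemma \ref{lem:m-positive} giving $\E_{p,q}[m_U(p,q)]>0$ immediately yields $\E_{p,q}[m'_U(p,q)]>0$---whereas you rerun the Lemma \ref{lem:m-positive} argument under the biased sampling and explicitly verify that both orderings of a consecutive window pair receive positive mass there, which is a slightly more careful treatment of the fact that the conditionals in $m'_U$ are taken under the OCP-biased window distribution.
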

\begin{proof}
By direct comparison of the definition of $m_U(p,q)$ in Lemma \ref{lemma:S-unique} and $m'_U(p,q)$ in \ref{lemma:biased-S-unique}, we see that $\E_{p,q}[m_U(p,q)] > 0 \implies \E_{p,q}[m'_U(p,q)] > 0$.
Therefore, Lemma \ref{lem:m-positive} shows that Assumption \ref{asmp:U-asmp} implies $\E_{p,q}[m'_U(p,q)] > 0$ for all $U$ with $S \not \subset U$.
Hence, $S$ is the unique optimal representation for OCP-biased.
\end{proof}

Since $S$ is the unique optimal representation for OCP-biased, an entirely similar analysis to the proof of Theorem \ref{thm:finite-sample} yields a finite-sample bound for OCP-biased.
However, recall that the bound on the \emph{unlabeled} sample complexity depends on $1/\epsilon_0^2$, where $\epsilon_0 = \min_{U : S\not\subset U} \err(U) - \err(S)$.
By comparing the error formulae in Lemmas \ref{lemma:S-unique} and \ref{lemma:biased-S-unique} (in particular, the difference between $m_U$ and $m'_U$), we see that for any fixed $U$ with $S \not\subset U$, $\err_{biased}(U) - \err_{biased}(S) < \err_{ocp}(U) - \err_{ocp}(S)$.
Therefore, for the same distribution over $X$, the $\epsilon_0$ for OCP is larger than the $\epsilon_0$ for OCP-biased. This results in a better upper bound on the unlabeled sample complexity required to find a good representation.
In the following section, we show empirically that OCP-biased indeed requires more samples to find a good representation, but ultimately finds the same representation as OCP.

\clearpage
\section{Synthetic Experiments}
\label{apdx:synth-experiments}

\subsection{Synthetic Distributions}
In this section we describe the distributions for the synthetic experiments in Section \ref{sec:synthetic}. 
Both of these distributions are members of the class of distributions from Section \ref{sec:model}.
We proved in Appendix \ref{apdx:theory} that all of these distributions satisfy Assumptions \ref{asmp:s-stays-active}-\ref{asmp:U-asmp}, and that $S$ is the optimal representation for OCP on these distributions.

\subsubsection*{Distribution 1 (Figure 2a)}
Our first synthetic distribution includes trajectories $X$ of length 10 with $d=8$ features that are generated as described below: 
\begin{itemize}{\itemsep=0em}
    \item The set $S$ consists of 4 time-irreversible features. Each feature in $S$ has a fixed probability of activating (switching from `0' to `1') over the entire trajectory, independent of the other features in $S$. Activation time was chosen uniformly over the whole trajectory and independently per feature, and once activated, features remained on. In our synthetic data, probabilities were 0.4 for the first two features and 0.6 for the next two features. 
    \item The set $\hat{S}$ consisted of noisy versions of the first three variables in $S$. $\epsilon_i$ was set to 0.7 for all variables.
    \item The last feature $X_{-1}$ was a background, reversible feature in $B$. It was a periodic function, alternating between 0 and 1, with uniform initialization over $\{0,1\}$.
\end{itemize}
\subsubsection*{Distribution 2 (Figure 2b)}
Similarly, our second synthetic distribution includes trajectories $X$ of length 10 with $d=7$ features that are generated as described below: 
\begin{itemize}{\itemsep=0em}
    \item The set $S$ consists of 4 time-irreversible features, identical to before. Each feature in $S$ has a fixed probability of activating (switching from `0' to '1') over the entire trajectory, independent of the other features in $S$. Activation time was chosen uniformly over the whole trajectory and independently per feature, and once activated, features remained on. In our synthetic data, probabilities were 0.4 for the first two features and 0.6 for the next two features. 
    \item The set $\hat{S}$ consisted of noisy versions of the first two variables in $S$. $\epsilon_i$ was set to 0.55 for all variables.
    \item The last feature $X_{-1}$ was a background, reversible feature in $B$. $X_{-1}^0$ was sampled uniformly from $\{0,1\}$ and the rest of $X_{-1}$ was set such that for all $t \geq 1$, $\P[X_{-1}^{t}=X_{-1}^{t-1}]=0.3$. 
\end{itemize} 
\subsection{Experimental Setup}
\subsubsection*{Synthetic Data Creation}
In order to quantify how many unlabeled pre-training samples are required to recover $S$, we created data sets $\{X_i\}_{i=1}^m$ with varying $m$, taking on values 50, 100, 200, 400, 600, 800, 1000, 2000, 4000, 8000, and 16,000. For each dataset size $m$, we generated 100 independently drawn sets, according to the distributions described previously.

For each created set, a single pair was sampled from each trajectory according to OCP, PCL, or OCP-biased sampling. 
I.e. an unlabeled dataset of $m$ trajectories produced a pre-training dataset with $m$ pairs.

\subsubsection*{Selection of Optimal Representation}
Now, for each of our pre-training datasets and each sampling scheme (OCP, PCL, OCP-biased), we now determine what feature representation $\hat{g}$ would be selected via pre-training for each one. 

In order to do so, we iterate over each possible $g\in \cG$ where $\cG = \{U \subset [d] : |U| = |S| = d_0\}$. After subselecting to the features in $g$, each pair of data points $(X^t, X^{t'})$ was featurized as input as $[X^t; X^{t'}; X^t-X^{t'}; |X^t-X^{t'}|]$. The loss is then minimized via the LogisticRegression implementation from scikit-learn with the `liblinear' optimizer. We then select the $\hat{g}$ that minimizes the empirical 
pre-training loss and calculate the overlap with the true features in $S$. 

\subsection{Explanation of Observed Behavior}
\subsubsection*{Distribution 1}
In Distribution 1, we see that OCP essentially always converges to the optimal representation with 8,000 data points, and OCP-biased also coverges, albeit slower. 
However, PCL is never able to break the barrier of 3 features, since it opts to choose the periodic feature $X_{-1}$ instead. 
This feature (which alternates between 0 and 1) is highly discriminative for the PCL pre-training task, since $|X_{-1}^t-X_{-1}^{t+1}|=1$ whenever $Y=1$, and the same is true only half the time when $Y=0$. 
Therefore, while it is a background, reversible feature that may not be useful for a downstream task, it is useful for the PCL task, and therefore PCL fails to find the optimal representation of time-irreversible features. 
In contrast, for OCP, $|X_{-1}^t-X_{-1}^{t+1}|=1$ is true across all examples, and therefore would not be chosen as a discriminative feature. 
\subsubsection*{Distribution 2}
In Distribution 2, we again see that all methods are able to identify the optimal representation. 
However, OCP requires fewer pre-training samples in order to reach the optimal representation.
While not periodic as in Distribution 1, $X_{-1}$ is again weakly predictive of whether or not a window is consecutive.
While $X_{-1}$ is not as strongly predictive as in Distribution 1, PCL still opts to select it in $\hat{g}$ in the lower-data regime. 
However, with sufficient data, PCL overcomes the ``false'' signal to opt for the ``correct'' feature instead.  

\clearpage
\section{Real-world Experiments}
\label{apdx:experiments}

\subsection{Pre-training for feature selection}
\subsubsection*{Implementation Details}
For the linear pre-training for feature selection experiment, we utilized the LogisticRegression implementation from scikit-learn with the `liblinear' optimizer and balanced class reweighting \citep{scikit-learn}. Hyperparameter tuning was conducted on the validation set independently for each feature subset, dataset size, and fold number. Hyperparameters included regularization scheme (`l1' or `l2') and regularization constant ($10^{-3}$ to $10^6$). All performance reported is on the held-out test sets using the best hyperparameter setting from the validation set. 

\subsubsection*{Granular Experimental Results}
Below, we present a more granular view on results including standard deviations, focusing on the difference between direct downstream prediction and using order contrastive pre-training to select features.  

\begin{table}[h]
    \centering
        \caption{AUC $\pm$ standard deviations (as calculated over five folds), comparing the use of all features to just those selected by OCP. The final row displays the difference in AUC between OCP-selected features and all features and the standard deviation of that difference.}
\begin{tabular}{l|lllll|}
\cline{2-6}  & \multicolumn{5}{c|}{\textit{Fraction of training data}}     \\ \hline
\multicolumn{1}{|l|}{\textit{Features}} & \multicolumn{1}{l}{\textbf{1}} & \multicolumn{1}{l}{\textbf{1/2}} & \multicolumn{1}{l}{\textbf{1/4}} & \multicolumn{1}{l}{\textbf{1/8}} & \multicolumn{1}{l|}{\textbf{1/16}} \\ \hline
    \multicolumn{1}{|l|}{\textbf{OCP subset}}
    & 0.864   $\pm$ .022  
    & 0.860     $\pm$ .026
    & 0.847   $\pm$ .024
    & 0.808     $\pm$ .023 
    & 0.786 $\pm$ .068 \\ \hline 
    \multicolumn{1}{|l|}{\textbf{All features}}   
    & 0.856  $\pm$ .021                        
    & 0.851  $\pm$ .021                          
    & 0.818      $\pm$ .036                      
    & 0.723  $\pm$ .057                           
    & 0.726  $\pm$ .074  \\\hline 
\multicolumn{1}{|l|}{\textbf{OCP - All}}  
    & 0.008   $\pm$ .014                        
    & 0.008  $\pm$ .039                           
    & 0.029  $\pm$ .035                  
    & 0.082 $\pm$ .058                            
    & 0.054  $\pm$ .050            \\
\hline
\end{tabular}

    \label{tab:my_label}
\end{table}

\subsubsection*{Average Precision Results}
In addition to the note-level AUC, we also provide results via a different patient-level precision metric; this provides another interpretable view on performance, since each patient has multiple notes in the test set, and the labels are imbalanced. We define \textit{Average Precision at 80\% recall} in the following manner. We first find the threshold at which 80\% of positive labels (displays progression) would be recovered. Then per patient, we calculate the precision of the retrieved notes, assuming use of that threshold, which we then average over patients. If no notes are surfaced for a patient, we set the precision to 1 if no positive note exists, and 0 otherwise.

We then assess performance using this precision-level metric. We follow the same procedure as before, except that hyperparameter settings are now chosen on the basis of this precision metric on the validation set, instead of AUC. In the table below, we note the same trends are present with average precision as with AUC; namely, there is essentially no difference when all training data can be used, but a much larger difference when the model is restricted to only a fraction.

\begin{table}[h]
    \centering
        \caption{Average precision at 80\% recall $\pm$ standard deviations (as calculated over five folds), comparing the use of all features to just those selected by OCP. The final row displays the difference in average precision between OCP-selected features and all features.}
\begin{tabular}{l|lllll|}
\cline{2-6}  & \multicolumn{5}{c|}{\textit{Fraction of training data}}     \\ \hline
\multicolumn{1}{|l|}{\textit{Available features}} & \multicolumn{1}{l}{\textbf{1}} & \multicolumn{1}{l}{\textbf{1/2}} & \multicolumn{1}{l}{\textbf{1/4}} & \multicolumn{1}{l}{\textbf{1/8}} & \multicolumn{1}{l|}{\textbf{1/16}} \\ \hline
    \multicolumn{1}{|l|}{\textbf{OCP subset}}     & 0.54   $\pm$ 0.09                        & 0.54     $\pm$ 0.07                        & 0.51  $\pm$ 0.08                          & 0.48     $\pm$ 0.09                        & 0.42 $\pm$ 0.10                            \\ \hline 
    \multicolumn{1}{|l|}{\textbf{All features}}   & 0.54  $\pm$ 0.12                         & 0.50  $\pm$ 0.12                           & 0.49      $\pm$ 0.05                       & 0.37  $\pm$ 0.07                           & 0.37 $\pm$ 0.07                           \\\hline 
\multicolumn{1}{|l|}{\textbf{OCP - All}}  & 0   $\pm$ 0.08                        & 0.05  $\pm$ 0.10                           & 0.02           $\pm$ 0.08                  & 0.10 $\pm$ 0.07                            & 0.05  $\pm$ 0.05                         \\
\hline
\end{tabular}
    \label{tab:ap_features}
\end{table}

\newpage 
\subsection{Nonlinear representations}
\subsubsection*{Implementation Details}
The masked language modeling was conducted using the BertForMaskedLM implementation from \citet{wolf-etal-2020-transformers} with a 15\% masking rate and a learning rate of 5e-5. The checkpoint used downstream was selected as the one in which the model had the lowest validation loss on a held-out set of notes. The contrastive pre-training across all 3 objectives was conducted using the BertForNextSentencePrediction implementation from \citet{wolf-etal-2020-transformers} with a learning rate of 1e-5 and weight decay of 0.01. For each contrastive approach, the model checkpoint with the highest validation accuracy on a held-out set of 3200 contrastive pairs was chosen for use downstream. The  L2-regularized linear layer was implemented using scikit-learn, using class-balanced reweighting and the `liblinear' optimizer. As before, the regularization (1e-2 to 1e5) was chosen for each seed, fold, model, and dataset size using the best performance on the validation set. 
\subsubsection*{Computational Burden}
Each pre-training method was trained on a single NVIDIA GeForce GTX 1080 Ti GPU with 12GB of memory. Convergence required about 6 hours for the masked language modeling pre-training and 3-4 hours for each of the contrastive pre-training methods. Since pre-training was conducted over 3 different seeds, it collectively required about 54 GPU hours. The forward pass to extract frozen embeddings from the different models across seeds and folds was minimal in time (less than 15 minutes). All downstream experiments and hyperparameter tuning involved solely a linear layer and were conducted on a CPU.

\subsubsection*{Granular Experimental Results}
Below, we display more granular experimental results, splitting performance by seed. We display both the mean and standard deviation AUC across the 5 folds at each training data size. While there is some variation between seeds, we find that OCP consistently outperforms at smaller dataset sizes. 

\begin{table}[h]
    \centering
        \caption{AUC average and standard deviation (as calculated over five folds) over each of the three seeds trained per method.}
\begin{tabular}{l|ccccc|}
\cline{2-6}  & \multicolumn{5}{c|}{\textit{Fraction of training data}}     \\ \hline
\multicolumn{1}{|l|}{\textit{AUC $\pm$ std dev}} & \multicolumn{1}{c}{\textbf{1}} & \multicolumn{1}{c}{\textbf{1/2}} & \multicolumn{1}{c}{\textbf{1/4}} & \multicolumn{1}{c}{\textbf{1/8}} & \multicolumn{1}{c|}{\textbf{1/16}} \\ \hline  \cline{1-6}
\multicolumn{1}{|l|}{\textbf{OCP Seed A}}   
& 0.87 $\pm$ .04    & 0.86 $\pm$ .04  &  0.84 $\pm$ .03 &  0.82 $\pm$ .03  & 0.83 $\pm$ .04        \\\hline  
\multicolumn{1}{|l|}{\textbf{OCP Seed B}}   
& 0.86 $\pm$ .03    & 0.85 $\pm$ .04  &  0.84 $\pm$ .04 &  0.81 $\pm$ .02  & 0.82 $\pm$ .03        \\\hline 
\multicolumn{1}{|l|}{\textbf{OCP Seed C}}   
& 0.88 $\pm$ .02    & 0.86 $\pm$ .04  &  0.85 $\pm$ .03 &  0.81 $\pm$ .03  & 0.79 $\pm$ .02        \\\hline  \cline{1-6}

\multicolumn{1}{|l|}{\textbf{BERT}}   
& 0.79 $\pm$ .06    & 0.74 $\pm$ .06  & 0.72 $\pm$ .04 & 0.63 $\pm$ .05 & 0.60 $\pm$ .06      \\\hline  \cline{1-6}

\multicolumn{1}{|l|}{\textbf{Fine-Tuned LM Seed A}}   
& 0.87 $\pm$ .05    & 0.85 $\pm$ .06  &  0.85 $\pm$ .02 &  0.77 $\pm$ .06  & 0.75 $\pm$ .06        \\\hline 

\multicolumn{1}{|l|}{\textbf{Fine-Tuned LM Seed B}}   
& 0.84 $\pm$ .03    & 0.82 $\pm$ .04  &  0.79 $\pm$ .02 &  0.70 $\pm$ .07  & 0.67 $\pm$ .09        \\\hline 

\multicolumn{1}{|l|}{\textbf{Fine-Tuned LM Seed C}}   
& 0.83 $\pm$ .02    & 0.80 $\pm$ .01  &  0.77 $\pm$ .01 &  0.73 $\pm$ .06  & 0.71 $\pm$ .05        \\\hline  \cline{1-6}

\multicolumn{1}{|l|}{\textbf{Pt-Contrastive Seed A}}   
& 0.82 $\pm$ .04    & 0.79 $\pm$ .03  &  0.77 $\pm$ .04 &  0.69 1 $\pm$ .07  & 0.66 $\pm$ .10        \\\hline   

\multicolumn{1}{|l|}{\textbf{Pt-Contrastive Seed B}}   
& 0.85 $\pm$ .02    & 0.83 $\pm$ .04  &  0.78 $\pm$ .04 &  0.72 $\pm$ .03  & 0.69 $\pm$ .1        \\\hline   

\multicolumn{1}{|l|}{\textbf{Pt-Contrastive Seed C}}   
& 0.84 $\pm$ .05    & 0.85 $\pm$ .05 &  0.83 $\pm$ .03 &  0.77 $\pm$ .04  & 0.72 $\pm$ .06       \\\hline  \cline{1-6} 

\multicolumn{1}{|l|}{\textbf{PCL Seed A}}   
& 0.88 $\pm$ .02    & 0.88 $\pm$ .03 &  0.85 $\pm$ .02 &  0.83 $\pm$ .04  & 0.78 $\pm$ .03           \\\hline 

\multicolumn{1}{|l|}{\textbf{PCL Seed B}}   
& 0.86 $\pm$ .01    & 0.85 $\pm$ .04  &  0.80 $\pm$ .06 &  0.77 $\pm$ .02  & 0.76 $\pm$ .04        \\\hline 

\multicolumn{1}{|l|}{\textbf{PCL Seed C}}   
& 0.86 $\pm$ .03    & 0.84 $\pm$ .03  &  0.80 $\pm$ .04 &  0.75 $\pm$ .04  & 0.71 $\pm$ .08         \\\hline  
\end{tabular}
    \label{tab:seed_results}
\end{table}

\newpage 

\subsubsection*{Average Precision Results}
Below we show results on the test set using the patient-level average precision metric introduced in Appendix B.1. As before, hyperparameter settings are chosen based on the model with best performance on the validation set, per this precision metric. Again, we find that OCP is relatively consistent in performance even with minimal training data, and outperforms other methods, particularly in the low data regime.
\begin{table}[h]
\setlength{\tabcolsep}{4.5pt}
    \centering
        \caption{Performance in terms of average precision at 80\% recall as calculated across 5 folds and 3 seeds for fine-tuned methods. The first row contains the mean average precision of OCP $\pm$ its standard deviation. The following rows contain the mean precision advantage of OCP over each comparison method, and the percentage of time OCP outperforms that method, computed over all seeds and folds.}
\begin{tabular}{l|ccccc|}
\cline{2-6}  & \multicolumn{5}{c|}{\textit{Fraction of training data}}     \\ \hline
\multicolumn{1}{|l|}{\textit{Prec diff (OCP Win \%)}} & \multicolumn{1}{c}{\textbf{1}} & \multicolumn{1}{c}{\textbf{1/2}} & \multicolumn{1}{c}{\textbf{1/4}} & \multicolumn{1}{c}{\textbf{1/8}} & \multicolumn{1}{c|}{\textbf{1/16}} \\ \hline
\multicolumn{1}{|l|}{\textbf{OCP Prec}}   
& 0.57 $\pm$ .08    & 0.58 $\pm$ .07  &  0.55 $\pm$ .06 &  0.50 $\pm$ .08  & 0.50 $\pm$ .09        \\\hline  \cline{1-6}
\multicolumn{1}{|l|}{\textbf{OCP - BERT}}   
& 0.16 (100\%)    & 0.19 (100\%)  & 0.19 (100\%) & 0.19 (100\%) & 0.22 (100\%)       \\\hline  
\multicolumn{1}{|l|}{\textbf{OCP - FT LM}}   
& 0.09 (80\%)  & 0.09 (80\%) & 0.09 (84\%) &  0.11 (87\%) & 0.13 (93\%)     \\\hline  
\multicolumn{1}{|l|}{\textbf{OCP - Pt-Contrastive}}   
& 0.08 (73\%)   & 0.09 (80\%) & 0.14 (100\%) & 0.15 (96\%) & 0.17 (98\%)  \\\hline  

\multicolumn{1}{|l|}{\textbf{OCP - PCL}}   
& 0.02 (56\%)   & 0.04 (69\%) & 0.07 (73\%)  & 0.10 (84\%)   & 0.13 (91\%)   \\\hline  
\end{tabular}
    \label{tab:ap_deep}
\end{table}

\subsubsection*{Qualitative Analysis}
We now conduct a qualitative analysis to understand whether the OCP pre-trained model is operating as expected, namely whether it is attending to those features we would expect to be most crucial both in ordering and for downstream progression extraction.

The BertForNextSentencePrediction model used for contrastive pre-training is implemented such that sequence classification is based off of the representation of the \texttt{CLS} classifier token. Therefore, we examine those tokens which are most highly attended to by the \texttt{CLS} token in the last BERT layer, as a proxy signal for what the model is attending to for its final classification representation. Over all examples in the validation set, we find the average attention each token contributed in the final layer. There are then 21 tokens with an average attention above a threshold of 0.1; they include \textit{increased/increase/increasing}, \textit{change}, \textit{unchanged}, \textit{no}, \textit{stable}, \textit{negative}, \textit{persistent}, and \textit{resolved}. All such features elucidate disease stage, and therefore, it seems qualitatively plausible that OCP learns useful downstream representations in the nonlinear case. Other highly ranked features include the \texttt{SEP} token used to split the contrastive pairs, as well as \textit{prior}, which is a possible leaky feature that indicates order information, but is not as useful for downstream analysis.

\end{document}